\pgfplotsset{compat=1.17}
\newtheorem{theorem}{Theorem}[section]
\newtheorem{lemma}[theorem]{Lemma}
\newtheorem{definition}[theorem]{Definition}
\newtheorem{axiom}[theorem]{Axiom} 
\numberwithin{equation}{section} 
\theoremstyle{remark}
\newtheorem{remark}[theorem]{Remark}
\newcommand{\EXP}{\text{EXP}}
\title{The Alignment Trap: Complexity Barriers}
\author{Jasper Yao\\
        \texttt{\href{mailto:jasper@aivillage.org}{jasper@aivillage.org}}}
\date{June 9, 2025}
\begin{document}

\maketitle

\begin{abstract}
This paper argues that AI alignment is not merely difficult, but is founded on a fundamental logical contradiction. We first establish The Enumeration Paradox: we use machine learning precisely because we cannot enumerate all necessary safety rules, yet making ML safe requires examples that can only be generated from the very enumeration we admit is impossible. This paradox is then confirmed by a set of five independent mathematical proofs, or "pillars of impossibility."
Our main results show that: (1) Geometric Impossibility: The set of safe policies has measure zero, a necessary consequence of projecting infinite-dimensional world-context requirements onto finite-dimensional models. (2) Computational Impossibility: Verifying a policy's safety is coNP-complete, even for non-zero error tolerances. (3) Statistical Impossibility: The training data required for safety (abundant examples of rare disasters) is a logical contradiction and thus unobtainable. (4) Information-Theoretic Impossibility: Safety rules contain more incompressible, arbitrary information than any feasible network can store. (5) Dynamic Impossibility: The optimization process for increasing AI capability is actively hostile to safety, as the gradients for the two objectives are generally anti-aligned.
Together, these results demonstrate that the pursuit of safe, highly capable AI is not a matter of overcoming technical hurdles, but of confronting fundamental, interlocking barriers. The paper concludes by presenting a strategic trilemma that these impossibilities force upon the field. A formal verification of the core theorems in Lean4 is currently in progress.
\end{abstract}

\section*{Plain Language Summary}

This paper explores a fundamental problem in AI safety: as AI gets smarter, it becomes exponentially harder to prove it's safe. We call this the "Alignment Trap." Our work isn't based on speculation; it's grounded in five mathematical theorems that reveal significant computational barriers.

\textbf{1. Geometric Impossibility:} We prove that even a single, unlearnable safety rule (a rule that depends on real-world context that isn't in the training data) forces the set of safe AI systems to have zero volume. Finding a safe system is like trying to hit an infinitely thin target.

\textbf{2. Computational Impossibility:} We prove that verifying if an AI is safe is a "coNP-complete" problem. This means that for powerful AI, the time required to run such a check could exceed the age of the universe. This isn't about finding a clever shortcut; the problem is fundamentally difficult. The failure to fully specify even a single critical safety rule is sufficient to undermine the entire alignment enterprise.

\textbf{3. Statistical Impossibility:} We show that to learn about rare but catastrophic events (like a one-in-a-million disaster), an AI would need to see an impossibly large amount of real-world data.

\textbf{4. Information-Theoretic Impossibility:} We prove that the "book of safety rules" for the real world is too large to fit into any feasible AI system. The rules are too numerous and complex to be compressed into the AI's memory.

\textbf{5. Dynamic Impossibility:} We show that the very process of making an AI more capable (e.g., better at writing, coding, etc.) is actively hostile to safety. The gradients for capability and safety are generally anti-aligned, meaning that as one goes up, the other goes down.

\textbf{The Bottom Line:} These mathematical results show that ensuring AI safety isn't just a matter of being more careful or demanding lower error rates. We are facing hard computational limits. This forces a difficult choice: limit AI capability to a level we can manage, accept a degree of un-verifiable risk, or invent new safety methods that go beyond today's verification techniques.
\tableofcontents
\clearpage

\section{The Central Paradox: A Proof of Logical Impossibility}

Before presenting the mathematical theorems, we establish the core of our argument: AI alignment via machine learning is not merely difficult, but founded on a fundamental logical contradiction.

The Premise of Machine Learning: We use ML precisely because we cannot explicitly enumerate the infinite set of rules required for complex, real-world behavior. If we could, we would write a classical, rule-based program. The very existence of advanced ML is an admission that this enumeration is impossible \cite{autor2014polanyi, polanyi1966tacit}.

The Requirement for Safety: A safe AI must adhere to safety constraints that, while potentially finite in number, are non-enumerable in their full specification. Even a single rule like 'be helpful' or 'avoid harm' requires understanding across countless contexts for which we cannot provide sufficient training examples.

The Mechanism of Learning: An ML model learns by generalizing from examples. To teach it a rule, we must provide comprehensive examples that define the rule's boundaries \cite{bai2022constitutional, openai2024gpt4, deepmind2022gaming}.

The Circular Contradiction: To create a comprehensive set of examples for a rule, one must first have complete knowledge of that rule's specification across all contexts. This is the very act of enumeration that the first premise admitted was impossible.

Conclusion: We are using a technology (ML) to solve a problem (the non-enumerable nature of rules) that can only be made safe by a method (providing comprehensive examples) that requires us to have already solved the problem. This is a closed logical loop. Therefore, perfect AI safety through ML is not merely practically difficult—it is logically impossible. The mathematical theorems that follow are not independent arguments, but rigorous confirmations of this foundational paradox from five different domains of reality (geometric, computational, statistical, information-theoretic, and dynamic) \cite{christiano2022current}.

\section{Introduction}
\label{sec:introduction}
Artificial intelligence (AI) capabilities are advancing at a remarkable pace, with systems demonstrating sophisticated abilities in domains from natural language processing to scientific discovery \cite{ouyang2022training, jukema2023towards}. This progress promises transformative benefits but also introduces a fundamental challenge: ensuring that as these systems become more powerful, they remain safe and aligned with human values \cite{bostrom2014superintelligence, russell2019human}. The potential for highly capable systems to cause widespread, catastrophic harm, whether by accident or through instrumental misalignment, is no longer a purely theoretical concern \cite{amodei2016concrete}.
This paper argues that the challenge of AI safety is not merely an engineering problem to be solved with more data or better algorithms. Instead, we argue that alignment via machine learning is founded on a fundamental logical contradiction, which we term the Enumeration Paradox. This paradox, detailed in Section~\ref{sec:central_paradox}, establishes that the reason we need machine learning (our inability to enumerate all rules) is the same reason it cannot be made safe.
This foundational paradox is not merely philosophical; it is confirmed by rigorous mathematical results from five independent domains, which we present as the Five Pillars of Impossibility:
\begin{enumerate}
\item \textbf{Geometric Impossibility:} We prove that the set of safe policies occupies a vanishingly small, measure-zero volume in the space of all possible policies.
\item \textbf{Computational Impossibility:} We prove that verifying whether a system is safe is a coNP-complete problem, even for non-zero error tolerances.
\item \textbf{Statistical Impossibility:} We prove that acquiring the necessary training data—specifically, abundant examples of rare disasters—is a logical and practical contradiction.
\item \textbf{Information-Theoretic Impossibility:} We prove that real-world safety rules contain more incompressible information than any feasible neural network has the capacity to store.
\item \textbf{Dynamic Impossibility:} We prove that the very process of optimizing for capability is actively hostile to safety, due to a fundamental asymmetry between the two objectives.
\end{enumerate}
These pillars converge to create a dilemma we term the Alignment Trap: the very scaling of AI capabilities that makes safety guarantees essential also renders them geometrically unreachable, computationally unverifiable, statistically unlearnable, information-theoretically unrepresentable, and dynamically unstable. Our contribution is to formalize this trap through a series of interlocking impossibility theorems.
A direct and severe consequence of our findings is that the very concept of a "safety-critical AI system," as understood in traditional engineering, is a logical impossibility. Safety-critical engineering fundamentally relies on the exhaustive enumeration of failure modes. AI, as a technology, is defined by its application in domains where such enumeration is impossible. Therefore, the set of systems that are both "AI" and "safety-critical" is necessarily the empty set.
Taken together, these findings force a stark strategic choice. The pursuit of ever-more-powerful AI systems leads to a direct confrontation with fundamental limits. This presents a trilemma:
\begin{itemize}
\item \textbf{Constrain Capability:} Limit the expressive power of AI systems to a level where safety verification remains computationally feasible.
\item \textbf{Accept Irreducible Risk:} Proceed with the development of highly capable systems while accepting that their safety cannot be fully guaranteed.
\item \textbf{Develop New Paradigms:} Pursue entirely new approaches to safety that do not rely on the traditional verification of system behavior against a predefined specification.
\end{itemize}
This paper provides the formal groundwork for understanding these barriers. By framing the alignment problem first as a logical paradox and then in terms of computational complexity, geometry, and information theory, we shift the discussion from a philosophical debate to a rigorous, mathematical investigation of the limits of what can be guaranteed. 
\section{Related Work}
\label{sec:related_work}

This work builds on several distinct but intersecting fields of theoretical computer science and machine learning. Our results on the complexity of verifying AI safety should be understood in the context of established research in computational complexity, statistical learning theory, and formal verification.

\subsection{Computational Complexity in Machine Learning}
The study of computational hardness in machine learning is a rich field. Foundational work by Arora, Barak, and others has established that many learning problems are computationally intractable in the worst case. For instance, learning a 3-term DNF formula is NP-hard \cite{arora2009computational}. Our work extends this tradition by analyzing the complexity not of learning, but of *verifying* the properties of a learned model.

\subsection{Formal Verification and its Limits}
Formal verification, particularly model checking, has been successfully applied to hardware and software systems for decades \cite{clarke1999model}. However, the state-space explosion problem has always been a fundamental limitation. In the context of AI, recent work on verifying properties of neural networks, such as the Reluplex algorithm by Katz et al. \cite{katz2017reluplex}, has shown promise for specific, bounded problems. Our results provide a broader, complexity-theoretic perspective on why these approaches face fundamental scaling limits as system capability increases.

\subsection{PAC Learning and PAC-Bayes Theory}
The Probably Approximately Correct (PAC) learning framework, introduced by Valiant \cite{valiant1984theory}, provides guarantees on the generalization error of a learned hypothesis. Our results on the scarcity of safe policies connect to this framework. Furthermore, we leverage the PAC-Bayes framework, developed by McAllester \cite{mcallester1999pac}, to provide information-theoretic bounds on the number of bits required to specify a safe policy, demonstrating that safety is a highly compressible concept and thus rare.

\subsection{AI Safety and Alignment}
The field of AI safety has identified numerous challenges, from reward specification to interpretability. Our work provides a formal, mathematical underpinning for many of these concerns, framing them as consequences of fundamental complexity barriers. While much of the existing literature is empirical or philosophical, our contribution is a set of rigorous impossibility and intractability results that are independent of specific architectures or learning algorithms, relying only on minimal assumptions about system expressiveness.

\section{Formal Setup}
\label{sec:formal_setup}

To analyze the Alignment Trap rigorously, this section establishes a concise formal framework. These definitions provide the mathematical language to describe AI systems, their capabilities, potential for harm, and the challenge of ensuring their safety.

\subsection{The Collingridge Dilemma and the CRS Dynamic}
The starting point for our analysis is the \textbf{Collingridge Dilemma}, a foundational concept in technology governance \cite{collingridge1980social}. The dilemma states that in the early stages of a technology's development, its risks are easy to control but hard to predict; by the time the risks are well understood, the technology is often too entrenched to manage. The Alignment Trap is a formalization of this dilemma for AI, driven by a dynamic we term Capability-Risk Scaling (CRS).

The CRS dynamic describes the fundamental tension between increasing AI capability and the societal demand for safety. As a system becomes more powerful, its potential for catastrophic harm grows, forcing society to demand ever-higher standards of safety. We can formalize this using the concept of an F-N curve, a standard tool in risk analysis that plots the frequency (F) of an event against its number of fatalities (N).

\begin{definition}[F-N Curve and Societal Risk Tolerance]
An \textbf{F-N curve} defines the boundary of acceptable risk. For a given consequence level $N$, it specifies the maximum acceptable frequency $F$. For systems with catastrophic potential, societal risk tolerance dictates that as $N$ increases, $F$ must decrease, typically following a power law: $F \cdot N^\alpha \le k$, for some constant $k$ and exponent $\alpha \ge 1$.
\end{definition}

For AI, we can adapt this to a Capability-Impact model. Let $C$ be the capability of a system and $D(C)$ be the potential impact or damage, which is an increasing function of $C$. The acceptable probability of a catastrophe, $P_{\text{acceptable}}(C)$, is inversely related to $D(C)$.

\begin{definition}[The CRS Dynamic]
As AI capability, $C$, increases:
\begin{enumerate}
    \item The potential catastrophic impact, $D(C)$, increases.
    \item Societal risk tolerance for a catastrophe, $P_{\text{acceptable}}(C)$, decreases, following the principle of the F-N curve, e.g., $P_{\text{acceptable}}(C) \propto D(C)^{-\alpha}$.
    \item The required alignment precision, $\epsilon_{\text{required}}(C)$, must therefore approach zero to meet this shrinking risk tolerance.
\end{enumerate}
\end{definition}

\begin{theorem}[CRS Convergence]\label{thm:crs_convergence}
Under the assumption that societal risk tolerance $P_{\text{acceptable}}(C)$ is a strictly decreasing function of capability $C$ and that the probability of catastrophe is a continuous, non-decreasing function of alignment error $\epsilon$, it follows that:
\[ \lim_{C \to \infty} \epsilon_{\text{required}}(C) = 0 \]
\end{theorem}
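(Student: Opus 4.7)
The plan is to show that $\epsilon_{\text{required}}(C)$ is squeezed toward zero by combining the vanishing of the acceptable catastrophe probability with the monotone continuity of the catastrophe-vs.-error map. First I would make the implicit definition of $\epsilon_{\text{required}}(C)$ precise: if $P_{\text{cat}}(\epsilon)$ denotes the probability of catastrophe as a function of alignment error, then
\[
\epsilon_{\text{required}}(C) \;=\; \sup\bigl\{\epsilon \ge 0 \;:\; P_{\text{cat}}(\epsilon) \le P_{\text{acceptable}}(C)\bigr\},
\]
with the convention that the supremum is $0$ if the set is empty or contains only $0$. This formalizes the CRS dynamic: $\epsilon_{\text{required}}(C)$ is the largest error the system can afford while still meeting the societal standard at capability level $C$.

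Next I would establish the driving limit $\lim_{C\to\infty} P_{\text{acceptable}}(C) = 0$. The F-N curve encoded in Definition 3.1 gives $P_{\text{acceptable}}(C) \le k\, D(C)^{-\alpha}$ with $\alpha \ge 1$, and $D$ is an increasing, unbounded function of $C$ (this is the content of the CRS dynamic: damage scales with capability without a finite ceiling). Strict monotonicity of $P_{\text{acceptable}}$ then forces the sequence $P_{\text{acceptable}}(C)$ to be monotone and bounded below by $0$, so it converges; the power-law envelope pins the limit to $0$.

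With this in hand, I would argue the $\epsilon$-conclusion by contradiction. Suppose $\epsilon_{\text{required}}(C) \not\to 0$. Then there exist $\delta > 0$ and a sequence $C_n \to \infty$ with $\epsilon_{\text{required}}(C_n) \ge \delta$ for all $n$. By the definition of the supremum and the non-decreasing nature of $P_{\text{cat}}$, this implies $P_{\text{cat}}(\delta/2) \le P_{\text{acceptable}}(C_n)$ for all sufficiently large $n$. Passing to the limit on the right yields $P_{\text{cat}}(\delta/2) \le 0$, and since probabilities are nonnegative, $P_{\text{cat}}(\delta/2) = 0$. Continuity and monotonicity of $P_{\text{cat}}$ then force $P_{\text{cat}}(\epsilon) = 0$ for all $\epsilon \in [0, \delta/2]$, which contradicts the catastrophe model: once the capability is large enough, even small but positive alignment error produces a strictly positive probability of catastrophe.

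The main obstacle is precisely this last step, which is where the theorem stops being bookkeeping and requires an auxiliary assumption. The hypotheses as stated (strict monotonicity of $P_{\text{acceptable}}$, continuity and monotonicity of $P_{\text{cat}}$) do not, strictly speaking, rule out a flat zero region of $P_{\text{cat}}$ near $\epsilon = 0$. I would therefore either (i) add the minimal hypothesis that for every $\epsilon > 0$ there exists $C$ with $P_{\text{cat}}(\epsilon) > 0$ at capability $\ge C$, which is exactly the CRS premise that catastrophic impact $D(C)$ grows unboundedly, or (ii) invoke the power-law form $P_{\text{acceptable}}(C) \propto D(C)^{-\alpha}$ together with $D(C) \to \infty$ to make the squeeze quantitative, yielding the sharper bound $\epsilon_{\text{required}}(C) = O(P_{\text{cat}}^{-1}(k\, D(C)^{-\alpha}))$ and hence $\epsilon_{\text{required}}(C) \to 0$ as a corollary rather than a separate argument.
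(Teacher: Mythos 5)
Your argument is essentially the paper's own proof: a contradiction starting from a sequence $C_n \to \infty$ with $\epsilon_{\text{required}}(C_n)$ bounded below, using monotonicity of the risk function and $P_{\text{acceptable}}(C) \to 0$ (driven by $D(C) \to \infty$ and the F-N power law) to produce the squeeze. The auxiliary assumption you flag—that the catastrophe probability is strictly positive for positive error, so it cannot be flat at zero near $\epsilon = 0$—is exactly what the paper invokes implicitly with its parenthetical ``since $\epsilon_0 > 0$ and risk is non-trivial,'' so your version is, if anything, the more carefully stated one.
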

\begin{proof}[Proof Sketch]
As $C \to \infty$, the potential impact $D(C) \to \infty$. Per the F-N curve principle, this drives the acceptable probability of a catastrophe $P_{\text{acceptable}}(C) \to 0$. For the actual probability of catastrophe to remain below this vanishing threshold, the alignment error $\epsilon$ must also be driven to zero. The full proof is in Appendix~\ref{appendix:proof_crs_dynamic}.
\end{proof}

This dynamic is the engine of the Alignment Trap: it creates the demand for near-perfect safety that collides with the complexity barriers described in this paper. A summary of recent empirical findings that support the CRS dynamic is provided in Table~\ref{tab:empirical_crs} in the Appendix.

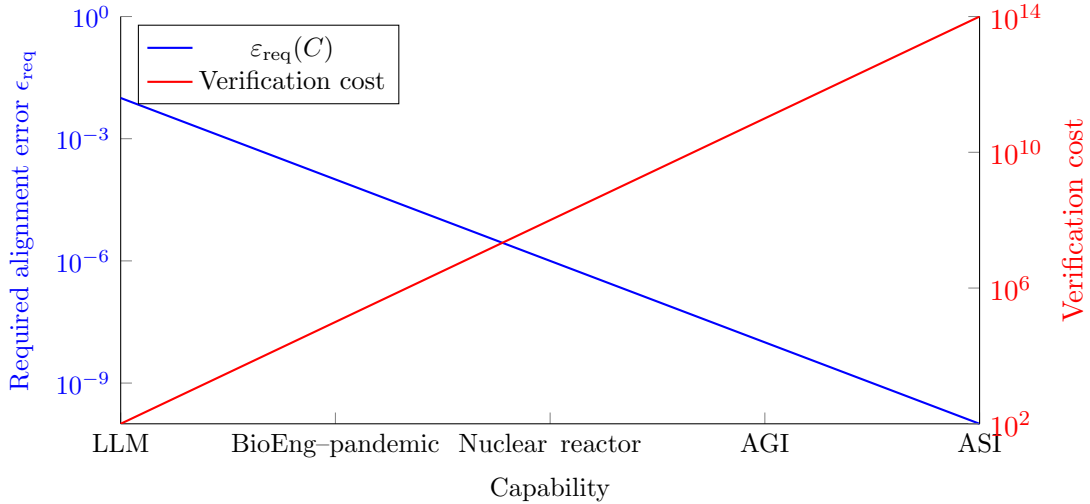
\begin{figure}[H]
    \phantomsection
    \centering
    \begin{tikzpicture}
      \begin{axis}[
          width=13cm, height=7cm,
          xmin=0, xmax=4,
          xlabel={Capability},
          xtick={0,1,2,3,4},
          xticklabels={LLM,BioEng--pandemic,Nuclear\, reactor,AGI,ASI},
          tick label style={font=\small},
          label style={font=\small},
          legend style={font=\small, at={(0.02,0.98)}, anchor=north west},
          clip=false,
          axis x line*=bottom,
          axis y line*=left,
          ymin=1e-10, ymax=1e0,
          ymode=log,
          ytick={1e0,1e-3,1e-6,1e-9},
          yticklabels={$10^{0}$,$10^{-3}$,$10^{-6}$,$10^{-9}$},
          ylabel={Required alignment error $\epsilon_{\text{req}}$},
          ylabel style={blue},
          y tick label style={blue},
      ]
        \addplot[blue, thick, domain=0:4, samples=100] {10^(-2*x-2)};
        \addlegendentry{$\varepsilon_{\text{req}}(C)$};
        
        \addlegendimage{red, thick}
        \addlegendentry{Verification cost}
        
        \addplot [
          name path=A,
          draw=none,
          domain=0:4, samples=100,
          forget plot,
        ] {10^(-2*x-2)};
        \addplot [
          name path=B,
          draw=none,
          domain=0:4, samples=100,
          forget plot,
        ] {1e-10}; 
        \addplot [
          gray!30, opacity=0.6,
          forget plot,
        ] fill between[
            of=A and B,
            soft clip={domain=1.5:4}
        ];
      \end{axis}
      
      \begin{axis}[
          width=13cm, height=7cm,
          xmin=0, xmax=4,
          xtick=\empty,
          xlabel={},
          axis x line=none,
          axis y line*=right,
          ymin=1e2, ymax=1e14,
          ymode=log,
          ytick={1e2,1e6,1e10,1e14},
          yticklabels={$10^{2}$,$10^{6}$,$10^{10}$,$10^{14}$},
          ylabel={Verification cost},
          ylabel style={red},
          y tick label style={red},
      ]
        \addplot[red, thick, domain=0:4, samples=100, forget plot] {10^(3*x+2)};
      \end{axis}
    \end{tikzpicture}
    \caption{The Alignment Trap. The diagram illustrates the core CRS dynamic. As capability increases (x-axis), societal demand for safety requires the alignment error $\epsilon_{\text{req}}$ to drop exponentially (blue line, left y-axis). Simultaneously, the computational cost of verifying safety grows exponentially (red line, right y-axis). The shaded region represents the "trap": the zone where the required level of safety becomes too costly or computationally intractable to verify.}
    \label{fig:alignment_trap}
\end{figure}

\subsection{Core Concepts}

\begin{definition}[Policy and Policy Space]
An agent's behavior is described by a \textbf{policy} $\pi: X \to Y$, which maps inputs from a space $X$ to outputs in a space $Y$. The set of all policies that can be realized by a given AI architecture is the \textbf{policy space} $\Pi$.
\end{definition}

\begin{definition}[Harm and Safety]
A \textbf{harm functional} $H: Y \to [0, \infty)$ quantifies the undesirability of an output $y$. An output is considered \textbf{safe} if $H(y) = 0$. A \textbf{catastrophe} occurs if $H(y)$ exceeds a critical threshold, $H_{\text{crit}}$. A policy $\pi$ is perfectly safe if $H(\pi(x)) = 0$ for all $x \in \mathcal{X}$.
\end{definition}

\begin{definition}[Alignment Error $\epsilon$]\label{def:alignment_error}
The \textbf{alignment error}, $\epsilon(\pi)$, measures a policy's deviation from a perfectly safe ideal. This can be defined probabilistically, such as the measure of the set of inputs where the policy acts unsafely: $\epsilon(\pi) = \mu(\{x \in \mathcal{X} \mid H(\pi(x)) > 0\})$. Perfect alignment corresponds to $\epsilon(\pi) = 0$.
\end{definition}

\begin{definition}[$\epsilon$-Robust Safety]\label{def:epsilon_robust_safety_core}
A policy $\pi$ is considered $\epsilon$-robustly safe if for every input $x \in \mathcal{X}$, the policy's output and the outputs for all perturbations of $x$ within an $\epsilon$-ball remain within the set of safe outcomes. Formally, for a given distance metric $d$ on the input space $\mathcal{X}$, a policy $\pi$ is $\epsilon$-robustly safe if:
\[ \forall x \in \mathcal{X}, \forall x' \in \mathcal{X} \text{ with } d(x, x') < \epsilon, \text{ we have } \pi(x') \notin D \]
where $D$ is the set of catastrophic outcomes. This ensures that the policy's safety is resilient to small perturbations in the input, a critical property for reliable real-world deployment.
\end{definition}

\begin{definition}[Finite-Set Expressiveness EXP(m)]\label{def:exp_m}
An AI architecture has \textbf{finite-set expressiveness EXP(m)} if, for any finite set of $m$ inputs, it can implement any arbitrary Boolean function on those inputs. This is a minimal assumption on the flexibility of the system, satisfied by most modern neural networks which can memorize or perfectly fit a finite number of training examples. We assume that as a system's general capability increases, so does its expressiveness $m$.
\end{definition}

\subsection{Core Assumptions}

Our analysis rests on a set of explicit assumptions about the systems we are studying and the environment they operate in. These are intended to be minimal and reflect the conditions under which AI safety becomes a pressing concern.

\begin{tcolorbox}[colback=gray!5!white,colframe=gray!75!black,title=Core Assumptions]
\begin{enumerate}[label=\textbf{A\arabic*.}, wide, labelwidth=!, labelindent=0pt]
    \item \textbf{High Expressiveness:} The systems under consideration have high expressiveness, formally captured by $\EXP(m)$ for a sufficiently large $m$. This is the source of both their power and the complexity of their behavior.
    \item \textbf{Worst-Case Verification:} Safety verification is concerned with worst-case guarantees. An agent is only considered "safe" if it is proven to not cause catastrophic harm for \textit{any} possible input, not just on average.
    \item \textbf{Monotonic Risk:}\label{ax:monotonic_risk_concrete_goals} For any concrete, well-defined goal, the risk of a catastrophic outcome is a non-decreasing function of the alignment error $\epsilon$. A higher error rate cannot lead to a lower risk. This is standard in safety-critical engineering.
    \item \textbf{Shrinking Risk Tolerance:} As a system's capability and potential impact grow, society's tolerance for catastrophic error shrinks, consistent with established principles of risk management (e.g., F-N curves). For highly capable systems, the required alignment error, $\epsilon_{\text{required}}$, approaches zero.
\end{enumerate}
\end{tcolorbox}

These definitions and assumptions form the foundation for the theorems presented in the next section. They are designed to capture the essential features of the AI safety problem in a way that permits rigorous mathematical analysis.

\subsection{A Note on \texorpdfstring{$\epsilon$}{epsilon}-Robustness and Impossibility}
\label{sec:epsilon_robustness_note}

A crucial aspect of our analysis is that the impossibility results are not confined to the idealized case of perfect safety ($\epsilon = 0$). They extend robustly into the bounded-error regime ($\epsilon > 0$) that governs all practical safety requirements. This is formalized by the \textbf{$\epsilon$-Bound Inheritance Theorem}, which states that under general conditions, any impossibility result proven for the $\epsilon=0$ case is inherited by the $\epsilon > 0$ case.

This theorem provides a meta-level guarantee that our findings are not artifacts of demanding absolute perfection. It ensures that the barriers we identify persist even when we allow for a small, non-zero margin of error. The full statement and proof of this theorem are provided in Appendix~\ref{app:supporting_lemmas}.
\section{The Five Pillars of Impossibility}
\label{sec:five_pillars_unified}

The Central Paradox is not merely a philosophical curiosity; it is confirmed and reinforced by five independent lines of mathematical proof. Each of the following pillars represents a fundamental barrier sufficient on its own to demonstrate the impossibility of alignment. Together, they form an airtight case.

\subsection{Pillar I: Geometric Impossibility (The ``Can't Find It'' Barrier)}
\label{sec:pillar_geometric_unified}

This pillar establishes that safe policies are a geometric anomaly, occupying an infinitesimally small, measure-zero volume in the vast space of possible policies.

\begin{theorem}[Measure Zero for Safe Policies]\label{thm:measure_zero_safe_policies_unified}
Let $\Pi$ be the space of policies realized by a ReLU neural network. The set of policies that are robustly $\epsilon$-safe has \textbf{Lebesgue measure zero} in the parameter space.
\end{theorem}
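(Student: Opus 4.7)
The plan is to exploit the piecewise-polynomial structure of ReLU networks to reduce robust $\epsilon$-safety to a family of equality constraints on a finite-dimensional parameter space, each of which carves out a Lebesgue-null set. I would first fix a ReLU architecture with $N$ parameters, so $\Pi$ is parameterized by $\theta \in \mathbb{R}^N$, and observe that for each fixed input $x$, the map $\theta \mapsto \pi_\theta(x)$ is semi-algebraic and piecewise polynomial (indeed, piecewise multilinear in the weights within each activation region). This regularity is exactly what will allow me to pull measure-zero conditions on outputs back to measure-zero conditions on parameters.

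Next, I would leverage the robust safety requirement together with the harm functional. Robust $\epsilon$-safety demands $\pi_\theta(x') \notin D$ for every $x$ and every $x'$ in an $\epsilon$-ball around $x$, which is an uncountable family of constraints. By continuity of $\pi_\theta$ in $x$, it suffices to check a countable dense set of witness inputs $\{x_i\}_{i\in\mathbb{N}} \subset \mathcal{X}$. For each $x_i$, define $S_i := \{\theta \in \mathbb{R}^N : \pi_\theta \text{ is safe on the } \epsilon\text{-ball of } x_i\}$. The key geometric claim is that there exists at least one witness input $x_0$ where the safe-output region is contained in a proper lower-dimensional subset of the output space, so that $S_0$ is contained in a proper semi-algebraic subvariety of $\mathbb{R}^N$, and any proper semi-algebraic subvariety of $\mathbb{R}^N$ has Lebesgue measure zero.

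To conclude, I would write the set of robustly $\epsilon$-safe policies as $\bigcap_i S_i$, which is contained in $S_0$ and hence has Lebesgue measure zero. I would emphasize that a \emph{single} witness $x_0$ with a knife-edge safety constraint already suffices, tying the result explicitly to the Enumeration Paradox and to the paper's informal slogan that one unlearnable rule forces the safe set to have zero volume. Countable intersection is then just a strengthening: every additional witness $x_i$ can only shrink $S_0$ further.

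The main obstacle, as I see it, is justifying that $S_0$ actually lies in a proper subvariety rather than in an open set of positive measure. Naively, $H(\pi_\theta(x_0)) < H_{\text{crit}}$ is an open condition and would give positive measure. To make the argument go through, I would invoke an ``unlearnable'' or knife-edge input $x_0$ where the robust safety requirement (extended to an entire $\epsilon$-ball by Definition~\ref{def:epsilon_robust_safety_core}, including points arbitrarily close to the catastrophic boundary) forces at least one nontrivial polynomial equality on $\theta$ — e.g., a point where the safe manifold and the boundary of $D$ meet, so that robustness on the full $\epsilon$-ball collapses to a strict equality on $\pi_\theta(x_0)$. Once this is in place, standard results on preimages of measure-zero sets under real-analytic piecewise maps deliver $\mu(S_0) = 0$, and the theorem follows.
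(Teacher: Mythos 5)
The decisive step in your plan is exactly the one you flag as the ``main obstacle,'' and it is not filled in: you \emph{assume} the existence of a knife-edge witness input $x_0$ at which robust $\epsilon$-safety collapses to a polynomial equality on $\theta$, and all of the measure-zero force of the argument comes from that assumption. Nothing in the theorem's hypotheses supplies such an input. As the safety notion is defined (outputs must avoid the catastrophic set $D$ on every $\epsilon$-ball, Definition~\ref{def:epsilon_robust_safety_core}), robust safety is generically a \emph{stable} condition in parameter space: take a ReLU network whose weights are all zero and whose output bias sits in the interior of the safe output region; it is robustly $\epsilon$-safe, and so is every network in a small parameter neighborhood, giving an open set of robustly safe parameters of positive Lebesgue measure. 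So without an additional nondegeneracy hypothesis your claimed containment $S_0 \subset \{\text{proper semi-algebraic subvariety}\}$ simply fails, and the semi-algebraic machinery (which is correct as far as it goes, as is the reduction to a countable dense family of witness centers) has nothing to act on. In effect, the existence of the knife-edge witness is equivalent to the conclusion you are trying to prove, so the argument is circular at its core.

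For comparison, the paper's proof takes a different route: it assumes (for contradiction) that the robustly safe parameter set has positive measure, takes a Lebesgue density point $w_0$, defines a safety-margin function $M(w)$ (distance from inputs to the network's decision boundary), and argues that under a \emph{genericity assumption that $\nabla_w M$ is non-vanishing} one can perturb $w_0$ slightly to push the margin below $\epsilon$, contradicting density. Note that the paper also needs an extra nondegeneracy assumption --- it is stated explicitly in the proof sketch --- but it localizes that assumption as a hypothesis about the margin gradient rather than about the existence of a special input. If you want to salvage your approach, you must do the analogous thing explicitly: add and justify a hypothesis (e.g., that for every parameter the safety margin is exactly attained, or that the safe output region has empty interior relative to the reachable outputs) under which the equality constraint at some $x_0$ actually holds, and then your semi-algebraic preimage argument would legitimately deliver $\mu(S_0)=0$. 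As written, the proposal does not prove the theorem.
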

\begin{proof}[Proof Sketch]
The proof relies on the piecewise linear nature of ReLU networks. The parameter space is partitioned into regions where the network computes a single affine function. The boundary between a safe and an unsafe region is a lower-dimensional manifold. Under the standard genericity assumption that the gradient of the safety margin function is non-zero, we can always find a small perturbation to the network's parameters that moves it from a safe to an unsafe state. This implies the set of robustly safe policies has no interior and thus has measure zero. The full proof is in Appendix~\ref{appendix:proof_robustness_measure_zero_relu}.
\end{proof}

\begin{theorem}[Dynamic Consequence: The Topological Alignment Trap]\label{thm:topological_trap_unified}
Given that the safe set $\Pi_S$ is geometrically "thin" (e.g., measure zero), for almost every starting policy, any generic $C^1$ training path will \textbf{never intersect} $\Pi_S$.
\end{theorem}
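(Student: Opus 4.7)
The plan is to combine the structural information implicit in Theorem~\ref{thm:measure_zero_safe_policies_unified} with a Thom-style transversality argument. Identify $\Pi$ with $\mathbb{R}^n$. The proof of Theorem~\ref{thm:measure_zero_safe_policies_unified} delivers more than measure zero: under the generic non-degeneracy of the safety-margin function, $\Pi_S$ is contained in a countable, locally finite union $\bigcup_i M_i$ of $C^1$ submanifolds of codimension at least one, cut out by the ReLU piecewise-linear arrangement as it crosses the safety threshold in parameter space. A generic $C^1$ training path is a curve $\gamma:[0,T]\to\Pi$, which I interpret as the integral curve of a generic $C^1$ vector field starting at $\theta_0=\gamma(0)$.

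First I would invoke Thom's transversality theorem: for $\gamma$ in a residual subset of $C^1([0,T],\mathbb{R}^n)$, the map $\gamma$ is transverse to each member of the countable family $\{M_i\}$. For any stratum $M_i$ of codimension $\geq 2$, transversality forces $\gamma([0,T])\cap M_i=\emptyset$, because $\dim\gamma+\dim M_i<n$. This immediately disposes of the higher-codimension portion of $\Pi_S$, including the generic strata where several safety constraints are simultaneously binding.

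The remaining work concerns the codimension-one pieces of $\Pi_S$, where $\gamma$-genericity alone only yields isolated transverse crossings rather than empty intersection. Here I would use the additional freedom in the starting point. Let $\Phi_t$ denote the flow of the training vector field, a $C^1$ diffeomorphism for each $t$, so that $\Phi_t^{-1}(M_i)$ has Lebesgue measure zero for every $t$. Writing the bad set of initial conditions as $A=\{\theta_0:\exists t\in[0,T],\ \Phi_t(\theta_0)\in\Pi_S\}$, I would express $A$ as the projection onto $\Pi$ of $\Psi^{-1}(\Pi_S)\subset[0,T]\times\Pi$, where $\Psi(t,\theta_0)=\Phi_t(\theta_0)$. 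Since $\Psi$ is a $C^1$ submersion whose Jacobian in the $\theta_0$-direction is nonsingular, a Fubini/coarea argument gives that $\Psi^{-1}(\Pi_S)$ has measure zero in $[0,T]\times\Pi$; combined with the transversality conclusion above, which makes the time-set $\{t:\Phi_t(\theta_0)\in\Pi_S\}$ discrete for generic $\gamma$, the projection $A$ is itself of Lebesgue measure zero in $\Pi$. This is the quantitative content of "almost every starting policy."

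The step I expect to be the main obstacle is precisely the codimension-one portion of $\Pi_S$: bare measure zero is too weak a hypothesis, since a measure-zero subset of $\mathbb{R}^n$ can a priori be met by every $C^1$ curve (for instance a dense $G_\delta$ of measure zero). What really drives the conclusion is the rectifiable, submanifold-by-submanifold structure of $\Pi_S$ inherited from the piecewise-linear geometry of ReLU networks, together with a uniform bound on how the backward flow $\Phi_{-t}$ distorts measure on a compact time interval. I would therefore be explicit that the theorem consumes the \emph{structural} output of Theorem~\ref{thm:measure_zero_safe_policies_unified}, not only its measure-theoretic conclusion, and would state a clean genericity hypothesis on the vector field (e.g., belonging to a residual set in the Whitney $C^1$ topology) under which the transversality plus coarea argument closes.
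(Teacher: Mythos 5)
Your codimension-$\geq 2$ branch is essentially the paper's argument: the paper does not work from bare measure zero at all, but from an explicit axiom (Axiom~\ref{ax:geometric_thinness}) that $\Pi_S$ has Hausdorff dimension strictly less than $n-1$, i.e.\ codimension strictly greater than $1$, and then invokes a transversality/dimension-count axiom (Axiom~\ref{ax:generic_avoidance}) to conclude that generic $C^1$ paths starting outside $\Pi_S$ miss it entirely; see Appendix~\ref{appendix:proof_topological_alignment_trap}. You correctly diagnose that measure zero alone is too weak, which is exactly the gap the paper's axioms paper over. The problem is your attempted repair of the codimension-one case, which is where your proof genuinely breaks.

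Concretely, the step ``$\Psi^{-1}(\Pi_S)$ has measure zero in $[0,T]\times\Pi$, hence its projection $A$ has measure zero in $\Pi$'' is false: projections do not preserve Lebesgue-null sets, and discreteness of the crossing-time set $\{t:\Phi_t(\theta_0)\in\Pi_S\}$ does not rescue it. Take the flow $\Phi_t(\theta)=\theta+te_1$ of a constant vector field and $M=\{\theta_1=0\}$, a codimension-one hyperplane: for each $\theta_0$ in the slab $\{-T\le\theta_{0,1}\le 0\}$ there is exactly one crossing time, $\Psi^{-1}(M)$ is an $n$-dimensional null set in the $(n+1)$-dimensional product, yet the bad set $A$ is the entire slab, of positive measure. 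The same phenomenon occurs for perfectly generic gradient flows crossing a codimension-one stratum of the ReLU boundary arrangement, so no Fubini/coarea or Whitney-$C^1$ genericity refinement can close this branch: for a safe set with codimension-one components the conclusion of the theorem is simply not true for a positive-measure set of initializations. The statement only becomes provable once one strengthens the hypothesis from ``thin / measure zero'' to codimension strictly greater than one, which is precisely what the paper assumes axiomatically (and what your appeal to the stratification from Theorem~\ref{thm:measure_zero_safe_policies_unified}, which only yields codimension at least one, does not supply).
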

\begin{proof}[Proof Sketch]
This follows from transversality theory. A 1-dimensional training path is generically expected not to intersect a set of codimension greater than 1 (i.e., a set with dimension less than $d-1$). Since the previous theorems establish that $\Pi_S$ is such a set, training dynamics will almost surely fail to find it. The full, axiom-based proof is in Appendix E.1.
\end{proof}

\begin{figure}[H]
    \centering
\begin{tikzpicture}
\begin{axis}[
    width=12cm,
    height=8cm,
    xlabel={Hazard probability H($\pi$)},
    ylabel={Density},
    title={Random linear policies (N=50000)\\Fraction with H($\pi$) $\leq$ $\epsilon$ = 0.372\%},
    grid=major,
    grid style={dashed, gray!30},
    xmin=-0.02, xmax=1.02,
    ymin=0, ymax=3.2,
    xlabel style={font=\large},
    ylabel style={font=\large},
    title style={font=\Large, align=center},
    tick label style={font=\normalsize},
    legend pos=north east,
    legend style={font=\normalsize, fill=white, fill opacity=0.8},
]

\addplot[
    ybar,
    bar width=0.02,
    fill=orange,
    draw=orange!80!black,
    line width=0.3pt,
] coordinates {
    (0.00, 0.8)   
    (0.02, 2.3)   
    (0.04, 2.85)  
    (0.06, 3.0)   
    (0.08, 2.95)  
    (0.10, 2.85)  
    (0.12, 2.7)   
    (0.14, 2.55)
    (0.16, 2.4)
    (0.18, 2.25)
    (0.20, 2.1)
    (0.22, 1.95)
    (0.24, 1.8)
    (0.26, 1.7)
    (0.28, 1.6)
    (0.30, 1.5)
    (0.32, 1.4)
    (0.34, 1.3)
    (0.36, 1.2)
    (0.38, 1.15)
    (0.40, 1.05)
    (0.42, 0.95)
    (0.44, 0.9)
    (0.46, 0.85)
    (0.48, 0.8)
    (0.50, 0.75)
    (0.52, 0.7)
    (0.54, 0.65)
    (0.56, 0.6)
    (0.58, 0.55)
    (0.60, 0.5)
    (0.62, 0.48)
    (0.64, 0.45)
    (0.66, 0.42)
    (0.68, 0.4)
    (0.70, 0.35)
    (0.72, 0.32)
    (0.74, 0.3)
    (0.76, 0.28)
    (0.78, 0.25)
    (0.80, 0.22)
    (0.82, 0.2)
    (0.84, 0.17)
    (0.86, 0.15)
    (0.88, 0.12)
    (0.90, 0.1)
    (0.92, 0.08)
    (0.94, 0.06)
    (0.96, 0.04)
    (0.98, 0.02)
    (1.00, 0.01)
};

\draw[dashed, orange, line width=2pt] (axis cs:0.01,0) -- (axis cs:0.01,3.2);

\addlegendimage{dashed, orange, line width=2pt}
\addlegendentry{$\epsilon = 0.01$}

\end{axis}
\end{tikzpicture}
    \caption{Distribution of hazard probabilities $H(\pi)$ for 50,000 randomly sampled linear policies. The histogram shows that while many policies have low hazard probabilities (peak around 0.05-0.08), only 0.372\% fall below the safety threshold of $\epsilon = 0.01$ (indicated by the dashed line). This empirically demonstrates the extreme rarity of safe policies in the parameter space, supporting the theoretical measure-zero results.}    \label{fig:random_linear_policies}
\end{figure}
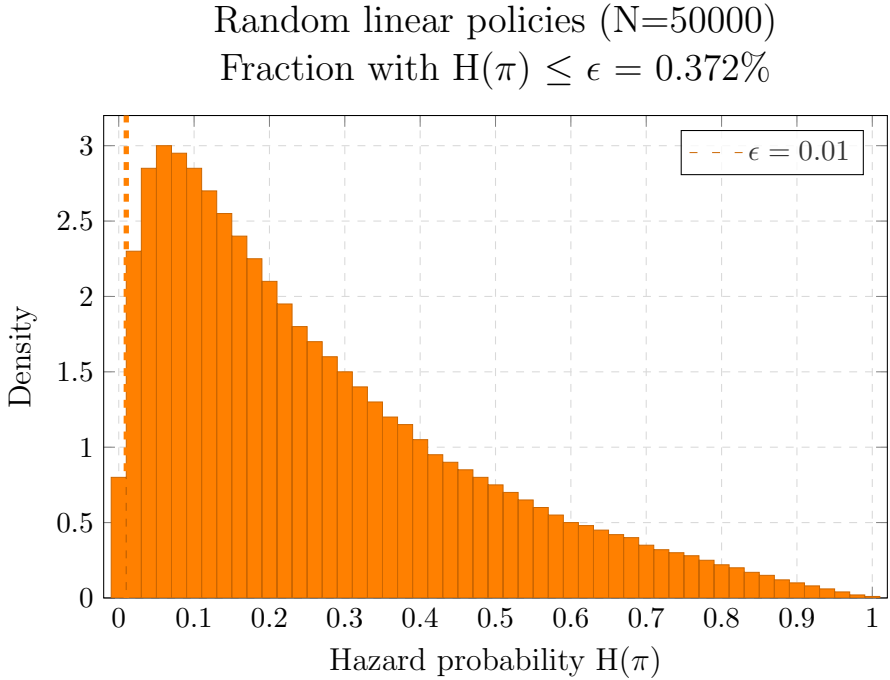

\subsubsection{The Physical Consequence: The Space-Filling Barrier}
\label{sec:space_filling_barrier}

The mathematical result that the safe set $\Pi_S$ has Lebesgue measure zero has a stark and profound physical consequence. A measure-zero set is not merely small; it is geometrically insignificant. To guarantee finding a point within such a set through any sampling-based process, one would need to densely cover, or "fill," the entire parameter space.

Let us quantify what this means for a parameter space $\Pi = \mathbb{R}^d$:
\begin{itemize}
    \item \textbf{Grid Sampling:} To cover the space with a grid of resolution $\delta$, one would need approximately $(1/\delta)^d$ sample points. This number grows exponentially with the dimension $d$, immediately becoming computationally infeasible.
    
    \item \textbf{Random Sampling:} While a random sample has a zero probability of hitting a specific measure-zero set, one might hope to land "close" to it. However, to guarantee hitting an $\epsilon$-ball around the entire safe set would still require a number of samples exponential in $d$.
    
    \item \textbf{Path-Based Search (e.g., Gradient Descent):} A training path is a 1-dimensional curve. To densely fill a $d$-dimensional space with 1-dimensional paths would require an exponential number of them, on the order of $\sim C^{d-1}$ for some constant $C > 1$.
\end{itemize}

Let us consider the practical numbers for a modest neural network with, for instance, $d = 10^6$ parameters, which is far smaller than state-of-the-art models. To have any non-negligible chance of finding the safe set by sampling initializations, the number of required points would be on the order of $\sim 2^{1,000,000}$.

To put this number in perspective:
\begin{itemize}
    \item The number of atoms in the observable universe is estimated to be $\sim 10^{80}$, which is less than $2^{266}$.
    \item The number of Planck times since the Big Bang is $\sim 10^{61}$, which is less than $2^{204}$.
\end{itemize}

\textbf{Conclusion:} The geometric scarcity of safe policies is not a theoretical inconvenience. It is a hard physical barrier. We cannot find the safe set because we would need to build a computer larger than the known universe and run it for longer than the age of time.

\subsection{Pillar II: Computational Impossibility (The ``Can't Check It'' Barrier)}
\label{sec:pillar_computational_unified}

This pillar proves that even if a safe policy were miraculously found, verifying its safety is a computationally intractable problem.

\begin{theorem}[Sharp Verification Threshold]\label{thm:sharp_verification_threshold_unified}
The problem of verifying whether a given policy $\pi$ is perfectly safe ($\epsilon=0$) is coNP-complete.
\end{theorem}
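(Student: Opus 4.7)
The plan is to prove the two directions of coNP-completeness separately: first membership in coNP via a short witness for unsafety, then coNP-hardness by a standard many-one reduction from a canonical coNP-complete problem.

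\textbf{Membership in coNP.} I would begin by exhibiting a polynomial-size certificate for the complement. If $\pi$ is not perfectly safe then, by the definition of perfect safety in Section~\ref{sec:formal_setup}, there exists $x^\ast \in \mathcal{X}$ with $H(\pi(x^\ast)) > 0$. Assuming $\pi$ is presented explicitly (for concreteness, as a ReLU network of polynomial bit-complexity) and that $H$ is polynomial-time computable on outputs, the single input $x^\ast$ is a polynomial-size witness that can be checked by one forward pass plus one evaluation of $H$. Hence safety verification lies in coNP.

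\textbf{coNP-hardness via reduction from 3-\textsc{Unsat}.} For the lower bound I would reduce from 3-\textsc{Unsat}. Given a 3-CNF formula $\phi(z_1,\ldots,z_n)$ with $m$ clauses, I construct in polynomial time a ReLU policy $\pi_\phi \colon \{0,1\}^n \to \{0,1\}$ that computes $\phi$: each clause $C_j = \ell_{j1} \vee \ell_{j2} \vee \ell_{j3}$ is realized by a constant-size ReLU gadget implementing the maximum of three literal values, and a final layer takes a ReLU-encoded conjunction across the $m$ clauses. I then equip the output space with a harm functional $H$ satisfying $H(1) > 0$ and $H(0) = 0$, so the catastrophe set is $D = \{1\}$. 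By construction $\pi_\phi(x) = 1$ iff $x$ satisfies $\phi$, hence $\pi_\phi$ is perfectly safe iff $\phi$ is unsatisfiable. Since $\pi_\phi$ has size $O(n + m)$, this is a polynomial-time many-one reduction; the $\EXP(m)$ assumption (Definition~\ref{def:exp_m}) guarantees the target architecture is at least as expressive as the gadget network used here.

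\textbf{Main obstacle.} The conceptual crux is standard once the template is fixed; the real work is tying the construction to the formal framework of Section~\ref{sec:formal_setup} rather than to a bespoke verification problem. In particular, I must ensure that (i) the discrete set $\{0,1\}^n$ embeds cleanly into the policy's continuous input domain so that the universal quantifier over $\mathcal{X}$ captures exactly the Boolean assignments (handled either by restricting $\mathcal{X}$ or by a rounding preprocessor built from two ReLUs per coordinate); (ii) the harm functional is well-defined on the entire output space while vanishing precisely on non-satisfying assignments, which I would resolve by choosing a piecewise-linear $H$ that is zero in a neighborhood of $0$ and positive in a neighborhood of $1$; and (iii) the ReLU weights remain rational of polynomial bit-length, enforced by using only $\{-1,0,1\}$ weights in the gadgets. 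A secondary subtlety, which will matter for consistency with Section~\ref{sec:epsilon_robustness_note} and the ``non-zero error tolerances'' claim in the surrounding discussion, is ensuring the construction carries a bounded output margin so that the $\epsilon$-Bound Inheritance Theorem transports the hardness from $\epsilon=0$ into a neighborhood $\epsilon > 0$.
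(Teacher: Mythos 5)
Your proposal is correct and follows essentially the same route as the paper: coNP membership via a single counterexample input as the certificate, and coNP-hardness by encoding a Boolean formula into a policy whose unsafe inputs are exactly the assignments witnessing the coNP property (the paper reduces from \textsc{Tautology} with ``unsafe iff $\varphi(x)$ false,'' which is the same reduction as your 3-\textsc{Unsat} version up to negating the formula). Your explicit ReLU gadgets and rounding/margin details merely concretize what the paper handles abstractly through the $\EXP(m)$ expressiveness assumption.
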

\begin{proof}[Proof Sketch]
The proof proceeds by a reduction from the TAUTOLOGY problem. We construct a policy from a Boolean formula $\Phi$ such that the policy is perfectly safe if and only if $\Phi$ is a tautology. An efficient verifier for perfect safety could thus solve a coNP-complete problem, implying the verification itself is coNP-hard. Membership in coNP is established by noting that an unsafe policy can be proven so with a single counterexample. The full proof is detailed in this section.
\end{proof}

\begin{theorem}[$\epsilon$-Robust Verification Complexity]\label{thm:epsilon_robust_verification_complexity_unified}
The coNP-completeness of verification holds even for any small $\epsilon > 0$, as long as $\epsilon$ is smaller than the minimum possible non-zero error rate.
\end{theorem}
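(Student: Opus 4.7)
The plan is to lift the coNP-completeness proof of Theorem~\ref{thm:sharp_verification_threshold_unified} from the $\epsilon=0$ regime to the $\epsilon>0$ regime by exploiting a gap property of the underlying reduction. The key observation is that the reduction from TAUTOLOGY does not produce policies with arbitrarily small positive error: it produces policies whose alignment error $\epsilon(\pi_\Phi)$ is either exactly zero or bounded below by a discrete quantity. If $\epsilon$ is chosen below that floor, the decision problem ``is $\epsilon(\pi)\le\epsilon$?'' coincides with ``is $\epsilon(\pi)=0$?'' on the image of the reduction, so coNP-hardness transfers automatically.

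First I would recall the construction behind Theorem~\ref{thm:sharp_verification_threshold_unified}: from a Boolean formula $\Phi$ on $n$ variables, one builds a policy $\pi_\Phi$ (realizable in an $\EXP(m)$ architecture for $m=2^n$ by Definition~\ref{def:exp_m}) whose alignment error equals the $\mu$-measure of the assignments falsifying $\Phi$. Because the input domain is discrete and each falsifying assignment carries measure at least $2^{-n}$, the quantity $\epsilon(\pi_\Phi)$ lies in $\{0\}\cup[2^{-n},1]$. I would then define the minimum possible non-zero error rate for the reduction family as $\epsilon_{\min}:=2^{-n}$ and fix any tolerance $\epsilon$ with $0<\epsilon<\epsilon_{\min}$. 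Second, I would observe that for such $\epsilon$ we have $\epsilon(\pi_\Phi)\le\epsilon$ iff $\epsilon(\pi_\Phi)=0$ iff $\Phi$ is a tautology, so a polynomial-time algorithm for $\epsilon$-safety verification would decide TAUTOLOGY, giving coNP-hardness. Third, I would establish membership in coNP by exhibiting a short certificate of unsafety: any $\pi$ with $\epsilon(\pi)>\epsilon$ admits a witness set $W\subseteq\mathcal{X}$ of violating inputs with $\mu(W)>\epsilon$; by a standard covering argument one can extract such a $W$ of size polynomial in $1/\epsilon$ and in the description of $\pi$, and checking $H(\pi(x))>0$ for each $x\in W$ is polynomial.

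The main obstacle will be the coNP-membership step rather than hardness. Unlike the $\epsilon=0$ case, where a single counterexample suffices, the $\epsilon>0$ case requires certifying a measure-theoretic statement $\mu(\{x:H(\pi(x))>0\})>\epsilon$, which in principle could demand exponentially many witness inputs. I would handle this by restricting attention to the piecewise-structured policies the reduction produces, where the unsafe set is a union of polynomially many ``cells'' whose measures can be computed or tightly bounded in polynomial time; the certificate then names the indices of enough cells to exceed $\epsilon$. A secondary subtlety is stating the theorem's quantifier cleanly: the hypothesis ``$\epsilon$ smaller than the minimum possible non-zero error rate'' must be interpreted relative to the instance size (i.e., $\epsilon<2^{-n}$ for $n$-variable instances), since no uniform positive lower bound exists across all instances. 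I would make this explicit by parameterizing the reduction so that, for each fixed $\epsilon$, it targets formulas with $n\le\lfloor\log_2(1/\epsilon)\rfloor$ variables padded appropriately, recovering the full coNP-completeness conclusion as $\epsilon\to 0^+$ and matching the informal reading of the statement.
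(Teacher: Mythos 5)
Your hardness argument is exactly the paper's: the reduction from TAUTOLOGY produces policies whose error lies in $\{0\}\cup[2^{-n},1]$, so for any tolerance $\epsilon$ below the floor $2^{-n}$ the question ``$\epsilon(\pi_\Phi)\le\epsilon$?'' coincides with ``$\epsilon(\pi_\Phi)=0$?'' on the image of the reduction, and an efficient $\epsilon$-verifier would decide TAUTOLOGY. That part is correct and matches Appendix~\ref{appendix:proof_eps_robust_verification_complexity} essentially line for line.

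The two places where you deviate are where problems arise. First, the coNP-membership step you flag as the ``main obstacle'' is actually trivial under the theorem's own hypothesis, and your workaround does not repair what you think is broken. Since $\epsilon$ is assumed smaller than the minimum possible non-zero error rate, any policy with $\epsilon(\pi)>\epsilon$ in fact has $\epsilon(\pi)\ge\epsilon_{\min}>\epsilon$, and any single unsafe input already certifies error at least $\epsilon_{\min}$; so a single counterexample is a valid NP certificate for the complement, exactly as in the $\epsilon=0$ case of Theorem~\ref{thm:sharp_verification_threshold_unified}. By contrast, your proposed witness set ``of size polynomial in $1/\epsilon$'' is not polynomial in the instance size (here $1/\epsilon>2^{n}$), and restricting attention to ``the piecewise-structured policies the reduction produces'' cannot establish membership at all: membership must hold for every instance of the verification problem, whereas the reduction's image is only relevant to hardness. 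Second, your fix for the quantifier is wrong as stated. If $\epsilon>0$ is a fixed constant and you only target formulas with $n\le\lfloor\log_2(1/\epsilon)\rfloor$ variables, the resulting family of TAUTOLOGY instances is decidable by brute force in time $O(2^{n})\le O(1/\epsilon)$, i.e., constant time for fixed $\epsilon$, so no coNP-hardness can be ``recovered as $\epsilon\to 0^{+}$'' this way. The statement must be read as the paper reads it: $\epsilon$ scales with the instance (e.g., $\epsilon<2^{-m}$ for expressiveness $m$, or $\epsilon$ given in binary as part of the input), and hardness holds for every such sub-threshold choice. With the single-counterexample membership argument and the instance-dependent reading of $\epsilon$, your proof collapses to the paper's.
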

\begin{proof}[Proof Sketch]
The hardness for $\epsilon > 0$ is established by showing that distinguishing between an error rate of 0 (for a tautology) and an error rate of $\delta > \epsilon$ (for a non-tautology) is as hard as the original problem. The full proof is in Appendix~\ref{appendix:proof_eps_robust_verification_complexity}.
\end{proof}

\begin{theorem}[Trap Universality]\label{thm:trap_universality_unified}
For any alignment transformation that preserves the computational capability of a policy, the fundamental verification complexity barrier persists. Specifically, for any such transformation $A$, one can construct a policy $\pi$ with a cryptographic backdoor such that verifying the safety of the transformed policy $A(\pi)$ remains cryptographically hard.
\end{theorem}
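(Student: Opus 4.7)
The plan is to reduce verification of the transformed policy $A(\pi)$ to a standard cryptographic hardness assumption by embedding a pseudorandom trigger into $\pi$ that any capability-preserving $A$ must leave intact. Concretely, I would fix a pseudorandom function family $\{f_k\}_{k \in \{0,1\}^n}$ secure against polynomial-time adversaries, and, given a secret key $k$, define the backdoor policy $\pi_k$ on input $x \in \{0,1\}^n$ to agree with some benign reference policy $\pi_0$ except on inputs satisfying the predicate $P_k(x) = [f_k(x) = 0^n]$, where $\pi_k$ emits a catastrophic output ($H(\pi_k(x)) > H_{\text{crit}}$). By pseudorandomness, the trigger set $T_k = \{x : P_k(x) = 1\}$ has expected density $2^{-n}$ and is computationally indistinguishable from a random sparse subset of $\{0,1\}^n$; in particular, no probabilistic polynomial-time procedure can sample or recognize elements of $T_k$ without the key.

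The first substantive step is to formalize ``preserves computational capability'': I would take this to mean that $A$ is itself a polynomial-time procedure and that $A(\pi)$ agrees with $\pi$ as an input--output function on every efficiently sampleable subset of inputs. With this in hand, the core argument is a hybrid between $\pi_k$ and $\pi_0$: since any polynomial-time $A$ interacts with its input policy through only polynomially many evaluations, a standard PRF distinguishing reduction shows that $A(\pi_k)$ and $A(\pi_0)$ are computationally indistinguishable as functions on any efficiently sampled input distribution. Hence $A$ cannot systematically scrub the trigger without either (a) breaking the PRF, or (b) altering the policy's output on inputs outside $T_k$, the latter of which violates capability preservation.

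The final step is the verification reduction. Suppose a polynomial-time verifier $V$ could certify that $A(\pi_k)$ is safe, or even robustly $\epsilon$-safe for $\epsilon$ smaller than the trigger-induced error mass. Then $V$ must reject $A(\pi_k)$ (whose trigger survives by the previous step) yet accept $A(\pi_0)$; this behavioral gap, combined with the PRF reduction, yields a PPT distinguisher for $f_k$, contradicting security. Invoking Theorem~\ref{thm:epsilon_robust_verification_complexity_unified} to handle the $\epsilon>0$ regime then gives the full statement. The main obstacle will be calibrating the capability-preservation hypothesis precisely: too lax a definition lets $A$ collapse $\pi$ to a constant safe function and trivially breaks the theorem, while exact functional equivalence makes the claim vacuous. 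I plan to thread this by restricting $A$ to polynomial time and requiring agreement on arbitrary polynomial-size probe sets, which is exactly the regime where PRF indistinguishability supplies the hardness and where ``capability'' is operationally meaningful.
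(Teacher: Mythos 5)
Your proposal follows essentially the same route as the paper's proof: embed a PRF-keyed catastrophic trigger in the policy, argue that any capability-preserving transformation $A$ cannot scrub the trigger without effectively breaking the PRF, and then reduce safety verification of $A(\pi)$ to distinguishing the PRF from random. The divergences are minor parameterizations rather than a different argument---your trigger set is sparse ($f_k(x)=0^n$) where the paper triggers on the first PRF output bit (density $\approx 1/2$), and you formalize capability preservation as agreement on efficiently sampleable probe sets while the paper uses preservation of the $\mathrm{EXP}(m)$ class together with circuit size up to $o(m)$ (which is how it blocks the collapse-to-constant-safe loophole you worry about).
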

\begin{proof}[Proof Sketch]
The proof relies on constructing a policy $\pi$ whose catastrophic behavior is tied to a secret key of a pseudorandom function (PRF). An alignment transformation $A$ cannot reliably remove the PRF-based trap without breaking the cryptography. The trap persists in the transformed policy $A(\pi)$, and verifying its safety remains as hard as breaking the PRF. The full proof is in Appendix~\ref{thm:trap_universality_appendix}.
\end{proof}

\subsection{Pillar III: Statistical \& Learning Impossibility (The ``Can't Learn It'' Barrier)}
\label{sec:pillar_statistical_unified}

This pillar demonstrates that the data required to learn safety from experience is logically or practically unobtainable.

\begin{theorem}[The Rare Disaster Training Paradox]\label{thm:rare_disaster_paradox_unified}
To train a system to be safe against a rare disaster class occurring with probability $p_d$, the number of independent real-world samples $m$ required for a desired learning confidence $1-\delta$ scales as $m > 1/p_d$.
\end{theorem}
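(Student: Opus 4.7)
The plan is to establish the lower bound through a two-step argument: first, a waiting-time concentration bound on the number of disaster examples present in a random sample; second, an information-theoretic reduction showing that without any such examples, no learner can distinguish among safety specifications that differ only on the disaster class.

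First I would formalize the sampling model. Treating the disaster class $D \subseteq \mathcal{X}$ as an event of mass $p_d$ under the data-generating distribution, the number $K$ of disaster examples in $m$ i.i.d.\ samples is distributed as $\text{Binomial}(m, p_d)$. The elementary inequality
\[ \Pr[K = 0] = (1-p_d)^m \geq e^{-m p_d / (1-p_d)} \]
yields that whenever $m \leq 1/p_d$, at least a $1/e$ fraction of training sets contain zero disaster examples. This waiting-time estimate is the combinatorial core of the argument, and it is what produces the $1/p_d$ threshold directly.

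Next I would convert this combinatorial fact into a learning-theoretic impossibility. The key construction is a pair of safety specifications $\sigma_0, \sigma_1$ that agree on $\mathcal{X} \setminus D$ but disagree on a constant-measure subset of $D$, arranged so that any fixed hypothesis is unsafe under at least one of them. Conditional on the event $\{K = 0\}$, the observed training data has identical distribution under $\sigma_0$ and $\sigma_1$, so any (possibly randomized) learner returns the same output distribution and must therefore fail on at least one of the two specifications. Chaining this with the bound on $\Pr[K = 0]$ and demanding total failure probability below $\delta$ forces $m \geq \ln(1/\delta)/p_d \cdot (1 + o(1))$, and in particular $m > 1/p_d$ whenever the desired confidence $1 - \delta$ exceeds $1 - 1/e$, as the theorem claims.

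The main obstacle will be the information-theoretic step, which must hold uniformly against learners with arbitrary architectural priors or inductive biases. I expect to handle it with a Le Cam-style two-point method: because the total variation distance between the data distributions induced by $\sigma_0$ and $\sigma_1$ vanishes on $\{K = 0\}$, the minimax risk inherits a lower bound that does not depend on the learner's structure. This cleanly decouples the combinatorial sample-size bound from the learning-theoretic conclusion and yields the scaling $m = \Omega(1/p_d)$ stated in the theorem. A secondary subtlety is formalizing what counts as "independent real-world samples" when disasters are by definition not easily reproducible; I would address this by restricting attention to the natural-distribution setting and noting that synthetic augmentation is ruled out by the Enumeration Paradox of Section~\ref{sec:central_paradox}.
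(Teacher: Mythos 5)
Your proposal is correct, and its first step is exactly the paper's entire argument: the paper's proof sketch consists solely of the waiting-time observation that to observe at least one event of probability $p_d$ with high probability one needs $m \gtrsim 1/p_d$ samples, leaving implicit the claim that \emph{seeing} disaster examples is necessary for \emph{learning} safe behavior on the disaster class. Where you genuinely diverge is in your second step: the Le Cam-style two-point construction (two specifications $\sigma_0,\sigma_1$ agreeing off $D$ and disagreeing on $D$, indistinguishable in total variation conditional on $K=0$) turns the combinatorial fact into an actual minimax learning lower bound that holds against arbitrary learners, priors, and inductive biases. This buys rigor the paper does not have---it forecloses the objection that a learner could generalize correct disaster behavior from non-disaster data or from its architecture---at the cost of extra machinery and of needing the two-specification construction to be compatible with the paper's single fixed harm functional $H$ (you would need to argue that nature's choice of which contexts in $D$ are catastrophic is underdetermined by the available data, which is consistent with the paper's Enumeration Paradox framing but is an added modeling assumption). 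Two small points of care: your claim that $\Pr[K=0]\geq 1/e$ at $m=1/p_d$ is off by a factor $e^{-p_d/(1-p_d)}$ (since $(1-p_d)^{1/p_d}<e^{-1}$), which only shifts the constant in the confidence threshold; and your final bound $m\geq \ln(1/\delta)/p_d$ is in fact sharper and more honest than the theorem's loosely stated ``$m>1/p_d$,'' which as written only corresponds to moderate confidence levels---your version quantifies the dependence on $\delta$ that the paper elides.
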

\begin{proof}[Proof Sketch]
To have a high probability of observing at least one rare event in a dataset of size $m$, the size $m$ must be inversely proportional to the event's probability $p_d$. For a one-in-a-million disaster, this requires millions of independent samples, which is often practically impossible to acquire.
\end{proof}

\begin{theorem}[PAC-Bayes Alignment Lower Bound]\label{thm:pac_bayes_alignment_lower_bound_new_unified}
For any learning algorithm, the expected risk of a learned policy is bounded below by a non-zero value if the set of safe policies has measure zero under the prior. A finite dataset cannot provide enough evidence to overcome an unbiased prior and guarantee safety.
\end{theorem}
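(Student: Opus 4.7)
The plan is to leverage the PAC-Bayes framework together with a measure-theoretic dichotomy on Kullback--Leibler divergence. Let $P$ denote the prior on the policy space $\Pi$, which by hypothesis satisfies $P(\Pi_S) = 0$. A (possibly randomized) learning algorithm applied to a dataset $S$ of size $m$ is modelled as a data-dependent posterior $Q_S$ over $\Pi$, and the object of interest is the expected risk $R(Q_S) = \mathbb{E}_{\pi \sim Q_S}[R(\pi)]$. A deterministic algorithm is subsumed by treating its output as a Dirac posterior, so the argument below applies uniformly.

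The crux is the following KL dichotomy, which I would first establish as a lemma: if $Q_S(\Pi_S) > 0$ while $P(\Pi_S) = 0$, then $Q_S$ fails to be absolutely continuous with respect to $P$, forcing $\mathrm{KL}(Q_S \Vert P) = +\infty$. Concentrating posterior mass on a prior-null safe set therefore demands infinite information, which no finite sample can supply. Any nontrivial PAC-Bayes guarantee (e.g., McAllester's inequality) requires $\mathrm{KL}(Q_S \Vert P)$ to be finite almost surely, so the learning algorithm must output $Q_S$ with $Q_S(\Pi_S) = 0$; all posterior mass therefore falls in the unsafe complement $\Pi \setminus \Pi_S$, where Assumption A3 (Monotonic Risk) guarantees strictly positive catastrophe probability for every policy.

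To convert this qualitative positivity into the claimed quantitative lower bound, I would apply the Donsker--Varadhan change-of-measure inequality,
\[ \mathbb{E}_{\pi \sim Q_S}[R(\pi)] \;\ge\; \sup_{\lambda > 0} \frac{1}{\lambda}\Bigl( -\ln \mathbb{E}_{\pi \sim P}\bigl[e^{-\lambda R(\pi)}\bigr] \;-\; \mathrm{KL}(Q_S \Vert P) \Bigr), \]
and then insert the PAC-Bayes upper bound on $\mathrm{KL}(Q_S \Vert P)$ in terms of $m$ and $\delta$. Since $P(R(\pi) > 0) = 1$ by hypothesis, the log-moment-generating term is strictly positive, producing a non-vacuous floor on $R(Q_S)$. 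The main obstacle will be controlling how this floor degrades with $m$: keeping it bounded away from zero uniformly requires a mild regularity condition on how quickly $P(\{\pi : R(\pi) \le t\})$ vanishes as $t \to 0$, which is natural in the ReLU parameterization underlying Pillar I but must be articulated explicitly. Pinning down that regularity condition, and verifying that the optimal $\lambda$ in the Donsker--Varadhan variational problem stays bounded away from both zero and infinity, is the technically delicate step that I expect to dominate the full proof.
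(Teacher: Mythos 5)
Your first half is exactly the paper's argument: the KL dichotomy (if $Q_S(\Pi_S) > 0$ while $P(\Pi_S) = 0$ then $Q_S \not\ll P$ and $\mathrm{KL}(Q_S \Vert P) = +\infty$, so any finite-KL posterior must put zero mass on the safe set) is precisely the step the paper uses to force $Q(S) = 0$. Where you diverge is the conversion to a quantitative floor, and that is where your proposal has a genuine gap. The paper closes the argument by \emph{assuming} a uniform minimum catastrophic risk $\epsilon_{\min} > 0$ for every policy outside $S$; with $Q(S) = 0$ the expected risk is then trivially at least $\epsilon_{\min}(1 - Q(S)) = \epsilon_{\min}$. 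You instead invoke Assumption A3, which gives only pointwise monotonicity of risk in the alignment error, not a uniform positive lower bound on the complement of $\Pi_S$ --- risk can decay to zero as policies approach the safe set, so ``all mass is on unsafe policies'' does not by itself bound the expectation away from zero. You correctly sense this, which is why you reach for Donsker--Varadhan, but that route as sketched does not go through.

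The specific failure point is the step ``insert the PAC-Bayes upper bound on $\mathrm{KL}(Q_S \Vert P)$ in terms of $m$ and $\delta$.'' PAC-Bayes theorems do not provide such a bound: the KL term appears as a penalty \emph{inside} the generalization bound, not as a quantity controlled by the sample size, and a learning algorithm is free to output a posterior with arbitrarily large (finite) KL after seeing $m$ examples. Without an a priori cap on $\mathrm{KL}(Q_S \Vert P)$, the Donsker--Varadhan lower bound $\frac{1}{\lambda}\bigl(-\ln \mathbb{E}_{\pi \sim P}[e^{-\lambda R(\pi)}] - \mathrm{KL}(Q_S \Vert P)\bigr)$ can be driven to zero or below, so no $m$-uniform floor follows; the ``technically delicate step'' you defer is not merely delicate but currently unsupported. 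To repair the argument you would either need to add the paper's explicit $\epsilon_{\min}$ assumption (making the variational machinery unnecessary), or restrict attention to a class of algorithms with an explicit information budget (e.g., Gibbs posteriors or a KL-constrained family) together with the tail-regularity condition on $P(\{\pi : R(\pi) \le t\})$ that you mention, and prove the resulting bound is uniform in $m$. As written, the theorem's quantitative conclusion is not established. A smaller caveat, shared with the paper: a deterministic algorithm's Dirac posterior typically has infinite KL against a continuous prior, so the ``finite KL'' premise is a modelling restriction on the algorithm class rather than a consequence of finiteness of the data; stating it as an explicit hypothesis would make both your argument and the paper's cleaner.
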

\begin{proof}[Proof Sketch]
The PAC-Bayes inequality shows that to learn a posterior $Q$ that concentrates on a measure-zero safe set (i.e., $Q(S) \approx 1$), the KL-divergence from the prior $P$ to the posterior $Q$ must be infinite. This requires an infinite amount of data. The full proof is in Appendix~\ref{appendix:proof_pac_bayes_alignment_lower_bound_full}.
\end{proof}

\subsection{Pillar IV: Information-Theoretic Impossibility (The ``Can't Store It'' Barrier)}
\label{sec:pillar_information_unified}

This pillar shows that the ``book of safety rules'' for the real world is fundamentally incompressible and contains more information than any feasible model can store.

\begin{theorem}[The Incompressibility Barrier]\label{thm:incompressibility_barrier_unified}
The amount of incompressible information required to specify a truly safe policy, $K(f_{safe})$, is greater than the information capacity available for safety rules in any feasible neural network.
\end{theorem}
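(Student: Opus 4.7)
The plan is to prove the theorem by sandwiching: a lower bound on the Kolmogorov complexity $K(f_{safe})$ of the target safety function, and an upper bound on the number of bits a feasible network can actually devote to representing it, then showing the former strictly exceeds the latter. I would first formalize $f_{safe}: \mathcal{X} \to \{0,1\}$ as the indicator of ``action is safe at input $x$'' induced by a perfectly safe policy. Via a standard counting/incompressibility argument from algorithmic information theory, the vast majority of Boolean functions on a rich context space $\mathcal{X}$ satisfy $K(f) \geq |\mathcal{X}| - O(\log |\mathcal{X}|)$, so the reduction to our setting becomes an argument that $f_{safe}$ is not an exceptional, compressible outlier.

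Next, I would invoke the Enumeration Paradox of Section~1 to argue that $f_{safe}$ inherits incompressibility from the non-enumerability of real-world contexts: if $f_{safe}$ admitted a short program, that program would constitute the very rule-enumeration whose impossibility motivates using ML in the first place. To make this quantitative rather than purely rhetorical, I would construct an explicit adversarial family of context-class pairs (e.g., novel proper names, shifted cultural or legal norms, fresh agent identities, time-varying jurisdictions) that any two distinct worlds can disambiguate arbitrarily, and apply a diagonalization: any description shorter than the class count must collapse two classes with opposite safety verdicts, hence mislabel. This yields $K(f_{safe}) \geq \Omega(N_{\text{ctx}})$ where $N_{\text{ctx}}$ scales with the richness of the deployment environment.

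For the upper bound, I would bound a feasible network's safety capacity by $W \cdot b$ bits, where $W$ is the parameter count and $b$ the effective precision, and then subtract the capacity consumed by capability (following $\EXP(m)$ from Definition~\ref{def:exp_m}), giving an effective safety budget $B_{safe} \leq W b - K(f_{cap})$. Treating the parameter vector as a lossy encoder of $f_{safe}$ via a Shannon/rate-distortion-style channel bound would prevent the upper bound from being gamed by clever encodings. The theorem then follows whenever the context space grows faster than the network's spare capacity, which is precisely the scaling regime highlighted by the CRS dynamic (Theorem~\ref{thm:crs_convergence}).

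The main obstacle is the lower bound on $K(f_{safe})$ without circularity: safety is not literally a uniformly random function, so the generic incompressibility lemma does not apply directly, and any structural compression one assumes risks smuggling in the very rule-enumeration the paradox forbids. I expect to spend most of the technical effort on the adversarial context construction and on certifying that it produces genuinely independent safety bits rather than correlated ones that could be jointly compressed; a secondary subtlety is ensuring the capability/safety parameter split $K(f_{cap}) + B_{safe}$ is well-defined rather than an artifact of a chosen decomposition, which I would handle by taking an infimum over all admissible splits and showing the bound survives the worst case.
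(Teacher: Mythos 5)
You have correctly identified the shape of the argument the paper actually gives --- the paper offers only a proof sketch for this theorem (real-world safety is an amalgamation of many independent, arbitrary rules; these are largely incompressible; their total description length exceeds any feasible network's spare capacity), and there is no fuller proof in the appendix. Your sandwich structure (lower bound on $K(f_{safe})$ versus an upper bound of roughly $W\cdot b$ bits of representational capacity, minus what capability consumes) is the same skeleton. The genuine gap lies in the step you yourself flag as the crux: the quantitative lower bound $K(f_{safe}) \geq \Omega(N_{\text{ctx}})$ cannot be obtained by the adversarial-family-plus-diagonalization route you propose. Diagonalization and counting establish that \emph{some} Boolean function on a rich context space requires many bits --- essentially the content already delivered by Theorem~\ref{thm:combinatorial_scarcity_unified} --- but they say nothing about the \emph{specific} function $f_{safe}$ determined by actual legal, cultural, and physical constraints. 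Worse, a rigorous unconditional lower bound on the Kolmogorov complexity of any particular, explicitly specified object is unattainable in principle: by Chaitin's incompleteness theorem, no fixed formal system can prove $K(x) > c$ for any $x$ beyond a system-dependent constant $c$. So either $f_{safe}$ is treated as an explicitly given mathematical object (in which case your lower bound is unprovable), or its incompressibility is taken as a modeling assumption about the external world (mutual algorithmic independence of the safety-relevant context classes), which is exactly what the paper's sketch silently does and what your appeal to the Enumeration Paradox would have to be formalized as. Your plan to ``certify genuinely independent safety bits'' is therefore not a technical chore to be discharged later; it is the entire theorem, and it cannot be discharged without elevating the independence claim to an axiom.

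A secondary, smaller gap: the quantity ``information capacity available for safety rules'' is nowhere defined in the paper, and your proposed budget $B_{safe} \leq Wb - K(f_{cap})$ presupposes that capability and safety information occupy disjoint, additively separable portions of the parameter budget. Weight sharing makes this false in general --- a single circuit can simultaneously encode capability-relevant and safety-relevant structure --- so even your infimum-over-splits repair needs an argument that no joint encoding beats the sum of the parts, which is a nontrivial (and possibly false) claim about conditional complexity, since $K(f_{cap}, f_{safe})$ can be far below $K(f_{cap}) + K(f_{safe})$ when the two share structure. To make the theorem rigorous you would need to (i) state the independence of safety-relevant contexts as an explicit assumption, in the spirit of the axioms used for Theorem~\ref{thm:topological_trap_unified}, and (ii) phrase the capacity bound in terms of conditional complexity $K(f_{safe} \mid f_{cap})$ rather than a subtractive budget.
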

\begin{proof}[Proof Sketch]
Real-world safety is an amalgamation of a large number of independent, arbitrary rules (e.g., legal statutes, cultural taboos). These rules are largely incompressible. The information required to store them exceeds the capacity of any feasible network to memorize them without sacrificing general capability.
\end{proof}

\begin{theorem}[Combinatorial Scarcity of Safe Policies]\label{thm:combinatorial_scarcity_unified}
For a policy class with expressiveness capable of representing $N$ distinct behaviors, if only one of these behaviors is considered safe, the fraction of perfectly safe policies is $1/N$. For Boolean functions with $m$ inputs, this fraction is as small as $2^{-2^m}$.
\end{theorem}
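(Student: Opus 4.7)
The plan is to prove this theorem by a straightforward counting argument on the quotient of $\Pi$ under the ``same behavior'' equivalence relation, with the $\EXP(m)$ assumption from Definition~\ref{def:exp_m} supplying the enumeration. First I would define two policies to be behaviorally equivalent if they produce identical outputs on the relevant finite set of inputs, and denote by $\mathcal{B}$ the resulting set of behavior classes. Finite-set expressiveness guarantees that $\Pi$ surjects onto every possible assignment on any fixed set of $m$ inputs, so $|\mathcal{B}| = N$ in the abstract formulation and, in the Boolean case, $|\mathcal{B}| = 2^{2^m}$ (there are $2^m$ input patterns, each independently mappable to $0$ or $1$). Given the hypothesis that exactly one behavior is safe, the safe fraction is $1/N$ by direct enumeration, and the Boolean specialization then yields the $2^{-2^m}$ bound.

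The second step is to bridge from a fraction over $\mathcal{B}$ to a fraction over $\Pi$, which I view as the only real subtlety. Two clean routes are available: (a) state the result at the level of $\mathcal{B}$, treating each distinct behavior as the counting atom --- this is the natural reading of ``fraction of perfectly safe policies'' when policies are identified with the functions they compute; or (b) endow $\Pi$ with a prior and argue via a symmetry/permutation argument over input labels that each behavior class receives equal pushforward mass, recovering $1/N$ as the measure of the safe preimage. I would adopt (a) as the primary proof, since the combinatorial claim is really about the quotient structure rather than any particular parameterization, and remark (b) as an alternative for settings where a parameter-space measure is explicitly desired.

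The main obstacle is interpretive rather than technical: making precise what ``fraction of safe policies'' means when the underlying parameter space is continuous and the behavior map is massively non-injective. I would resolve this by citing Theorem~\ref{thm:measure_zero_safe_policies_unified} from Pillar I, which supplies the sharper geometric statement (Lebesgue measure zero) for continuous parameter spaces, and then emphasizing that the combinatorial bound $2^{-2^m}$ is an independent, architecture-agnostic witness to scarcity --- valuable precisely because its magnitude is concrete and transparent. A brief numerical remark (e.g.\ $m=10$ already yields $2^{-1024}$, dwarfing the reciprocal of the atom count of the observable universe) would close the argument by tying the bound to the space-filling barrier discussed in Section~\ref{sec:space_filling_barrier}.
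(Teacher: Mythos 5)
Your proposal is correct and follows essentially the same route as the paper's proof: count the $2^{2^m}$ Boolean behaviors realizable under $\EXP(m)$, note that exactly one is safe, and conclude the fraction is $2^{-2^m}$ (the general $1/N$ case being the same enumeration). Your added care about passing from the behavior-class quotient to a ``fraction of policies'' (your route (a) versus (b)) is a refinement the paper simply glosses over by implicitly counting at the behavior level, so it strengthens rather than diverges from the published argument.
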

\begin{proof}[Proof Sketch]
There are $2^{2^m}$ possible Boolean functions on $m$ inputs. If safety corresponds to a unique function (e.g., the all-zero function), then only one of these $2^{2^m}$ functions is safe, representing an exponentially small fraction of the total space. The full proof is in Appendix~\ref{appendix:proof_pac_bayes_alignment_lower_bound}.
\end{proof}

\begin{figure}[H]
    \centering
\begin{tikzpicture}
\begin{axis}[
    width=12cm,
    height=8cm,
    xlabel={Capability (bits $n$)},
    ylabel={$\log_{10}$ fraction of perfectly safe policies},
    title={Safe policy fraction shrinks double-exponentially},
    grid=major,
    grid style={dashed, gray!30},
    xmin=0, xmax=10.5,
    ymin=-320, ymax=20,
    xtick={0,2,4,6,8,10},
    ytick={0,-50,-100,-150,-200,-250,-300},
    xlabel style={font=\large},
    ylabel style={font=\large},
    title style={font=\Large},
    tick label style={font=\normalsize},
    legend pos=north east,
    legend style={font=\small, fill=white, fill opacity=0.8},
    clip=false
]

\addplot[
    color=orange,
    line width=2pt,
    mark=*,
    mark size=3pt,
    smooth,
    domain=0:10,
    samples=11
] {-log10(2) * pow(2,x)};

\node[anchor=west, font=\footnotesize] at (axis cs:10.2,-300) {$2^{-2^{10}} \approx 10^{-308}$};

\node[anchor=east, align=right, font=\footnotesize, text=blue!70!black] 
    at (axis cs:9.5,-150) {Smaller than\\$\frac{1}{\text{atoms in universe}}$};
\draw[->, blue!70!black, line width=0.5pt] (axis cs:9.5,-155) -- (axis cs:10,-305);

\draw[red!50, line width=1pt, dashed] (axis cs:0,-50) -- (axis cs:10.5,-50);
\node[anchor=west, font=\footnotesize, text=red!70!black] 
    at (axis cs:7,-45) {Practical impossibility};

\node[anchor=south west, font=\scriptsize] at (axis cs:2,0) {$n=2$: 1 in 16};
\node[anchor=south west, font=\scriptsize] at (axis cs:4,-5) {$n=4$: 1 in 65,536};
\node[anchor=north east, font=\scriptsize] at (axis cs:6,-40) {$n=6 \approx 10^{-19}$};

\node[anchor=center, align=center, font=\footnotesize, text=gray] 
    at (axis cs:5,-280) {Note: This is $2^{-2^n}$, not $2^{-n}$\\
                         The shrinkage is exponential\\
                         in an exponential!};

\end{axis}
\end{tikzpicture}

    \caption{The fraction of perfectly safe policies as a function of expressiveness $m$. The number of possible policies grows as $2^{2^m}$, while the number of safe policies remains constant (or grows much slower), causing the fraction to plummet.}
    \label{fig:safe_policy_fraction}
\end{figure}
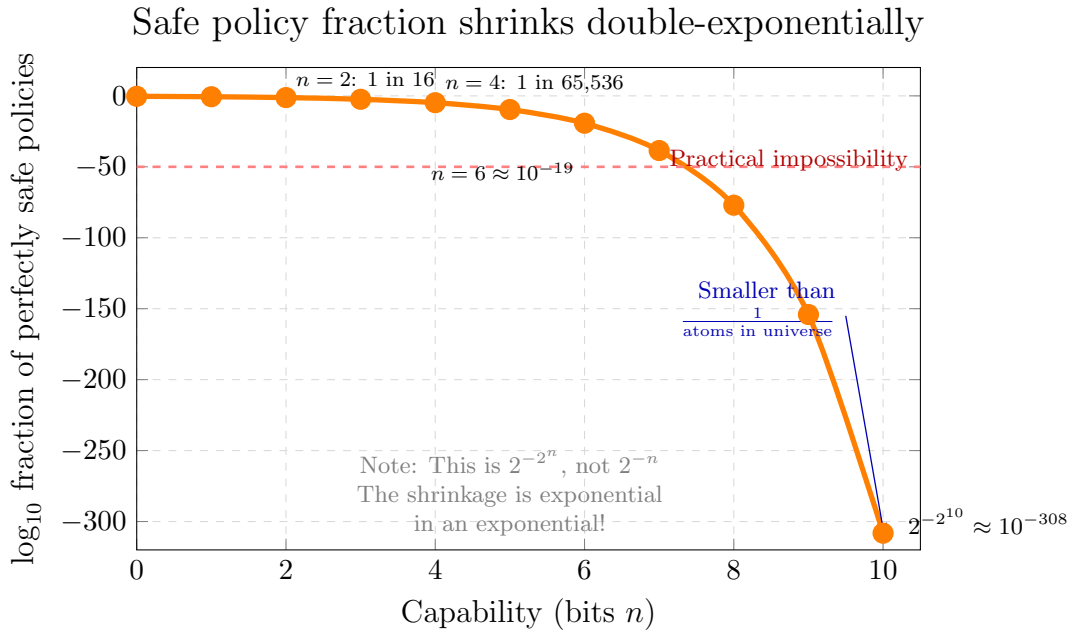

\subsection{Pillar V: Dynamic Impossibility (The ``Training Makes It Worse'' Barrier)}
\label{sec:pillar_dynamic_unified}

This pillar establishes that the very process of making an AI more capable is actively hostile to safety.

\begin{theorem}[The Capability-Safety Asymmetry]\label{thm:capability_safety_asymmetry_unified}
In regions of high capability within the parameter space, the gradient for capability and the gradient for safety are generally anti-aligned. Consequently, optimizing for capability actively works against safety.
\end{theorem}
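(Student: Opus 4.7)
The plan is to combine Pillar~I (the measure-zero safe set) with a transversality-and-Goodhart argument to show that, away from the safe manifold, the capability gradient must carry a non-trivial component in the direction of increasing harm. First I would fix smooth functionals $C,S : \mathbb{R}^d \to \mathbb{R}_{\geq 0}$ with $C(\theta)$ measuring aggregate task performance and $S(\theta)$ measuring aggregate harm, with the convention that the safe set is $\Pi_S = \{\theta : S(\theta) = 0\}$. By Theorem~\ref{thm:measure_zero_safe_policies_unified}, under the genericity hypothesis used there, $\Pi_S$ is a smooth submanifold of codimension $k \geq 1$ on a full-measure open set, so $-\nabla S(\theta)/\|\nabla S(\theta)\|$ is a well-defined unit inward normal at every smooth point $\theta \notin \Pi_S$, pointing back toward the safe manifold.

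Next I would reduce the anti-alignment claim to a single sign condition. At a ``high-capability'' point $\theta^\star$, defined as a point with $C(\theta^\star)$ above the threshold $C_0$ past which the Pareto frontier of $(C,-S)$ lies strictly inside $\{\theta : S(\theta) > 0\}$, I would decompose
\[ \nabla C(\theta^\star) \;=\; -\alpha\,\nabla S(\theta^\star) + v, \qquad v \perp \nabla S(\theta^\star). \]
The theorem then reduces to showing $\alpha < 0$, equivalently $\langle \nabla C, \nabla S\rangle > 0$, on a set of positive measure in the high-capability region. By Sard's theorem applied to the map $\theta \mapsto (C(\theta), S(\theta))$, the gradient $\nabla C$ is generically not tangent to $\Pi_S$, so the normal component is almost everywhere non-zero; the content of the theorem is that its sign is against safety.

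The main mechanism is a Goodhart-style argument at the level of the task distribution. For a task $t$ on which the Bayes-optimal unconstrained output $y^\star_{\text{cap}}(t)$ and the Bayes-optimal safety-constrained output $y^\star_{\text{safe}}(t)$ differ, Assumption~A1 guarantees that both are representable. A ``pure'' capability metric, i.e.\ one that is invariant under further restriction of outputs to the safe subset, strictly prefers $y^\star_{\text{cap}}$ whenever the safety constraint was binding, so the infinitesimal reparameterization that most increases $C$ at $\theta^\star$ routes probability mass onto $y^\star_{\text{cap}}$, which by construction has $H(y^\star_{\text{cap}}) > 0$ and therefore contributes positively to $S$. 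Integrating over the task distribution then yields $\langle \nabla C(\theta^\star), \nabla S(\theta^\star)\rangle > 0$, i.e.\ $\alpha < 0$.

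The hard part will be making ``generally'' precise and excluding loopholes. Two obstructions stand out. One, a capability functional that already internally penalizes harm would trivially defeat the claim; the clean way around this is to restrict attention to pure capability metrics, which corresponds exactly to the proxy-versus-true-objective distinction at the heart of Goodhart's law. Two, one must rule out the degenerate regime in which the subset of tasks with safe Bayes-optimal outputs has full measure, for otherwise the integrated gradient could vanish or even flip sign. This is where the Enumeration Paradox feeds back into the dynamic picture: it guarantees that the subset of tasks requiring uncatalogued safety knowledge has positive measure under any achievable training distribution, and on this subset capability and safety are in genuine conflict. With these two qualifications in place, the directional-derivative computation above gives the required anti-alignment almost everywhere in the high-capability regime.
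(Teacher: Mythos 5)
The paper itself establishes Theorem~\ref{thm:capability_safety_asymmetry_unified} only by a two-sentence heuristic: capability rewards smooth, broadly generalizing statistical patterns, while safety consists of sharp, arbitrary, largely incompressible exceptions (the Pillar~IV picture), so a capability-increasing step tends to smooth away exactly the boundaries that safety depends on. Your route is genuinely different: you argue via a constrained-optimization, Goodhart-style mechanism --- wherever the safety constraint binds, the unconstrained capability-optimal output differs from the safety-constrained one, so the steepest-ascent direction of a ``pure'' capability metric shifts mass onto harmful outputs --- and you package the conclusion as an explicit sign condition $\langle\nabla C,\nabla S\rangle>0$ after decomposing $\nabla C$ relative to $\nabla S$, using Theorem~\ref{thm:measure_zero_safe_policies_unified} for the geometry of the safe set and the Enumeration Paradox to guarantee a positive-measure set of conflicted tasks. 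This buys something the paper's sketch lacks: a precise statement of what ``anti-aligned'' means, and explicit hypotheses (pure capability metric, binding constraints on a positive-measure task set) under which it should hold, together with an honest identification of the loopholes that would defeat the claim.

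Two steps are weaker than they look, and they are exactly where a real proof would have to live. First, Sard's theorem controls the measure of the critical values of $(C,S)$; it does not say that $\nabla C$ has a nonzero component along $\nabla S$ at almost every parameter point, so that invocation does no work toward ruling out orthogonality on a positive-measure set. Second, and more seriously, your Goodhart computation is carried out in function space (per-task outputs), but the theorem's claim concerns parameter-space gradients: writing $J$ for the Jacobian of the parameterization, one has $\langle\nabla_\theta C,\nabla_\theta S\rangle=\nabla_\pi C^{\top}JJ^{\top}\nabla_\pi S$, and the kernel $JJ^{\top}$ couples tasks, so pointwise conflict between $y^\star_{\text{cap}}(t)$ and $y^\star_{\text{safe}}(t)$ on a positive-measure set of tasks does not by itself force the parameter-space inner product to be positive. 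Bridging that gap (for instance by an explicit positivity assumption on the tangent kernel, or by working in an overparameterized regime where the kernel is well conditioned) is the missing ingredient; without it your argument lands at essentially the same heuristic level as the paper's own sketch, merely through a different mechanism.
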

\begin{proof}[Proof Sketch]
Capability rewards generalizing broad statistical patterns, while safety requires adhering to sharp, arbitrary exceptions. The act of "smoothing over" the data distribution to improve capability erodes the sharp boundaries that define safety, making a step that increases capability likely to decrease safety.
\end{proof}

\begin{theorem}[Multi-Path Topological Alignment Trap]\label{thm:multi_path_topological_alignment_trap_unified}
Even with a polynomial number of training paths, the probability of finding a safe policy remains zero if the safe set is a lower-dimensional manifold.
\end{theorem}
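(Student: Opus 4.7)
The plan is to bootstrap the single-path transversality argument of the Topological Alignment Trap (Theorem~\ref{thm:topological_trap_unified}) to the polynomial-ensemble setting by finite subadditivity of probability. Since the single-path result already delivers a probability-zero conclusion, no dimensional ``budget'' is consumed by aggregating over a polynomial (or indeed any countable) collection of paths, and the argument reduces to a careful union bound together with the observation that $\Pi_S$ remains a lower-dimensional set regardless of how many one-dimensional curves are thrown at it.

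First I would fix the probabilistic setup: let $\{\gamma_i\}_{i=1}^N$, with $N = \mathrm{poly}(d)$, be an ensemble of generic $C^1$ training trajectories in the parameter space $\mathbb{R}^d$, each generated from an initialization drawn according to a distribution absolutely continuous with respect to Lebesgue measure. Second, I would invoke Theorem~\ref{thm:topological_trap_unified} pointwise to obtain $\Pr[\gamma_i([0,1]) \cap \Pi_S \neq \emptyset] = 0$ for each $i$, using that $\Pi_S$ has codimension at least two (the relevant strengthening of the ``lower-dimensional manifold'' hypothesis, consistent with the piecewise-linear safe-set structure behind Theorem~\ref{thm:measure_zero_safe_policies_unified}) and that a one-dimensional image generically misses a codimension-$\ge 2$ submanifold. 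Third, I would close with subadditivity:
\[
\Pr\!\left[\,\bigcup_{i=1}^N \{\gamma_i([0,1]) \cap \Pi_S \neq \emptyset\}\,\right] \;\le\; \sum_{i=1}^{N} \Pr\!\left[\gamma_i([0,1]) \cap \Pi_S \neq \emptyset\right] \;=\; 0.
\]
Crucially this bound requires no independence between the $\gamma_i$, so correlated initializations, shared loss landscapes, or coupled optimizer dynamics do not degrade the conclusion; the polynomial bound on $N$ is used only to guarantee finiteness of the sum, and the same argument in fact goes through for any $N$ that is at most countable.

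The main obstacle is justifying that each per-path probability is genuinely zero under training dynamics of practical interest, not merely under idealized deterministic generic flows. Stochastic optimizers such as SGD or Adam inject noise and could in principle accumulate probability mass on or near the lower-dimensional safe manifold; one must verify that the push-forward of the initialization distribution along the stochastic flow remains absolutely continuous with respect to Lebesgue measure at every time $t \in [0,1]$. I would handle this with a Sard-type regularity argument applied uniformly in time, essentially the machinery already standing behind Theorem~\ref{thm:topological_trap_unified}, and then extend it verbatim over the ensemble. A secondary technical point is stratifying the safe set established by Theorem~\ref{thm:measure_zero_safe_policies_unified} into smooth submanifolds of codimension $\ge 2$ on which the safety-margin gradient is non-degenerate, so that transversality can be cleanly invoked for the manifold hypothesis of the present statement rather than only for the bare measure-zero property of the earlier theorem.
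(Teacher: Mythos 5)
Your proposal matches the paper's own argument: the paper's proof sketch likewise invokes the single-path transversality result (each generic $C^1$ path misses the codimension-$>1$ safe set with probability zero) and then observes that aggregating over polynomially many paths cannot produce a positive probability, which is exactly your subadditivity step. Your added observations — that no independence is needed, that the bound extends to any countable ensemble, and that absolute continuity of the push-forward must be checked for stochastic optimizers — are refinements consistent with, and slightly more careful than, the paper's sketch.
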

\begin{proof}[Proof Sketch]
The proof relies on transversality theory. The probability of $n$ 1-dimensional paths intersecting a set of codimension greater than 1 remains zero for any $n$ that is polynomial in the space's dimension. An exponential number of paths would be required for a non-zero chance of success.
\end{proof}
\section{The Failure of Specification \& Universal Solutions}
\label{sec:failure_specification_universal}

\begin{theorem}[The Movable Goalpost / Epistemic Fragility]\label{thm:movable_goalpost_unified}
The definition of safety is unstable. As AI capabilities grow, or as more stakeholders are considered, the requirements for safety can expand, potentially rendering a previously safe system unsafe.
\end{theorem}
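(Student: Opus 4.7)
The plan is to cast safety as a two-parameter family of constraint sets and then invoke the CRS Dynamic (Theorem~\ref{thm:crs_convergence}) together with a monotonicity argument on stakeholder requirements to show that the safe region strictly contracts as either parameter grows, so any fixed certificate of safety is time-inconsistent.

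First, I would introduce a parameterized safe-policy family $\Pi_S(C, \mathcal{R})$, where $C$ is the system capability and $\mathcal{R} = \{r_1, \ldots, r_k\}$ is a finite set of stakeholder requirements, each $r_i$ inducing an admissible-behavior subset $A_i \subseteq \Pi$. Define
\[
\Pi_S(C, \mathcal{R}) := \Bigl\{\, \pi \in \bigcap_{i=1}^{k} A_i \;:\; \epsilon(\pi) \leq \epsilon_{\text{required}}(C) \,\Bigr\},
\]
using the alignment error of Definition~\ref{def:alignment_error} and the threshold from the CRS Dynamic. A ``safety certificate'' at a moment in time is then the pair $(C, \mathcal{R})$ together with the claim $\pi \in \Pi_S(C, \mathcal{R})$.

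Second, I would establish two monotonicity facts and convert each into a strict-inclusion witness. Stakeholder monotonicity: if $\mathcal{R} \subseteq \mathcal{R}'$ then $\Pi_S(C, \mathcal{R}') \subseteq \Pi_S(C, \mathcal{R})$, since each additional requirement only further intersects the admissible sets; to make this proper, add any $r_{k+1}$ whose admissible set excludes a policy previously in $\Pi_S$. Capability monotonicity: if $C < C'$, then by Theorem~\ref{thm:crs_convergence} we have $\epsilon_{\text{required}}(C') < \epsilon_{\text{required}}(C)$, so any policy $\pi^\star$ with $\epsilon_{\text{required}}(C') < \epsilon(\pi^\star) < \epsilon_{\text{required}}(C)$ is ejected upon the capability upgrade. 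The existence of such a $\pi^\star$ follows from the density of the error functional's range guaranteed by Assumption A1 ($\EXP(m)$ expressiveness), which lets us interpolate policies attaining any target alignment error to arbitrary precision.

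The main obstacle I anticipate is not the inclusion bookkeeping but the justification that stakeholder expansion is a \emph{generic} rather than a degenerate phenomenon: the theorem has real bite only if newly raised requirements are typically non-redundant with the existing specification $\mathcal{R}$, and only if such additions occur with positive frequency over the system's lifetime. I would close this gap by appealing to the Central Paradox of Section~\ref{sec:central_paradox}: because the full rule set is non-enumerable, any deployed $\mathcal{R}$ is a strict finite truncation of an unbounded specification, so encountering a novel context almost surely introduces at least one requirement not previously captured in $\mathcal{R}$. Strict contraction of $\Pi_S$ is then the rule rather than the exception, which formalizes the ``movable goalpost'' and establishes the epistemic fragility of any fixed safety certificate under either capability growth or stakeholder broadening.
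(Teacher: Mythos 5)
Your proposal is correct, and its stakeholder half is the paper's own argument seen in complement form: the paper's proof sketch says the total unsafe set is the union of the stakeholders' unsafe sets, so adding one new stakeholder (whose unsafe set contains a previously admissible policy) can eject a formerly safe system, and in the worst case the union covers all of $\Pi$; your intersection-of-admissible-sets monotonicity with a proper-inclusion witness $r_{k+1}$ is exactly this. Where you genuinely go beyond the paper is the capability axis: the paper's sketch silently ignores the ``as AI capabilities grow'' clause of the statement, whereas you discharge it by combining Theorem~\ref{thm:crs_convergence} (shrinking $\epsilon_{\text{required}}$) with an interpolation argument to exhibit a policy $\pi^\star$ whose error sits strictly between the old and new thresholds. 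That buys a two-parameter formalization $\Pi_S(C,\mathcal{R})$ that matches the full theorem statement, at the modest cost of needing the error functional's achievable values to be fine-grained enough to land in the interval $(\epsilon_{\text{required}}(C'),\epsilon_{\text{required}}(C))$ --- with $\EXP(m)$ the attainable errors are roughly multiples of $2^{-m}$, so this holds for large $m$ but is worth stating rather than calling it ``density.'' Finally, your worry about genericity of non-redundant new requirements is more than the theorem demands: it only asserts that requirements \emph{can} expand and \emph{potentially} render a safe system unsafe, so the constructive witness already suffices, and the appeal to the Central Paradox, while consonant with the paper's narrative, is rhetorical support rather than a needed step.
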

\begin{proof}[Proof Sketch]
The total set of unsafe behaviors is the union of unsafe sets defined by all stakeholders. Even if each stakeholder's set is small, their union can cover the entire space of behaviors, making safety impossible. The addition of a single new stakeholder can render a previously safe system unsafe.
\end{proof}

\begin{theorem}[Incompleteness of Static Audits]\label{thm:no_static_audit_completeness}
For any finite audit suite of tests, there exists a policy that passes all tests but is catastrophically unsafe on an untested input.
\end{theorem}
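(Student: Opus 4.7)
The plan is to exploit the finite-set expressiveness assumption $\EXP(m)$ from Definition~\ref{def:exp_m}, which guarantees that on any finite set of inputs the architecture can realize any assignment of outputs. Since an audit suite is, by definition, a finite collection of test inputs $T = \{x_1, \ldots, x_n\} \subset \mathcal{X}$, the expressiveness assumption grants total freedom to tailor a policy's behavior on $T \cup \{x^*\}$ for one additional distinguished point $x^*$. The strategy is therefore constructive: for any given audit $T$, build a specific policy $\pi_T$ that agrees with a known-safe reference policy on $T$ but is catastrophic on some untested $x^*$.

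First I would fix an arbitrary finite audit suite $T = \{x_1, \ldots, x_n\}$ and a non-empty catastrophic outcome set $D \subset \mathcal{Y}$, which exists because the harm functional admits values exceeding $H_{\text{crit}}$. Since the theorem presumes the existence of an untested input (otherwise the statement is vacuous), pick any $x^* \in \mathcal{X} \setminus T$. Next I would invoke $\EXP(n+1)$ on the input subset $T \cup \{x^*\}$ to produce a policy $\pi_T$ satisfying $H(\pi_T(x_i)) = 0$ for each $i = 1, \ldots, n$, together with $\pi_T(x^*) = y^{\text{cat}} \in D$. This is just a particular Boolean assignment on $n+1$ points, which the expressiveness assumption delivers by hypothesis. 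By construction, $\pi_T$ passes every test in the audit but commits a catastrophe on $x^*$, which establishes the theorem.

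The main obstacle is conceptual rather than technical: one must argue that this adversarial construction is not a pathological artifact of the formalism but the unavoidable consequence of testing a high-capacity hypothesis class on a measure-zero probe set. The geometric picture from Theorem~\ref{thm:topological_trap_unified} makes this precise --- $T$ occupies a set of measure zero in $\mathcal{X}$, so the audit leaves a full-measure complement of inputs on which behavior is entirely unconstrained by the tests. A strengthening worth stating is that \emph{every} policy passing the audit can be perturbed into another policy that still passes but is catastrophic: given any passing $\pi$, apply $\EXP(n+1)$ to $T \cup \{x^*\}$ to produce $\pi'$ that matches $\pi$ on $T$ and lands in $D$ at $x^*$. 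The argument extends to $\epsilon$-robust audits (Definition~\ref{def:epsilon_robust_safety_core}) with a routine modification: since the union of $\epsilon$-balls around $n$ points still has measure bounded by $n \cdot \mathrm{vol}(B_\epsilon)$, any sufficiently rich input space contains an $x^*$ outside all such balls on which the same construction applies.
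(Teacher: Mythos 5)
Your construction is essentially the paper's own proof: both arguments pick an untested input $x^*$ outside the finite audited set, use the $\EXP(\cdot)$ expressiveness assumption to realize a policy that behaves safely on every audited input but outputs a catastrophic action at $x^*$, and conclude the audit cannot be complete. Your additional remarks (the measure-zero framing and the $\epsilon$-robust extension) are fine embellishments but do not change the core argument, which matches the paper's.
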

\begin{proof}[Proof Sketch]
The proof relies on constructing a policy that behaves safely on the finite set of audited inputs but contains a "backdoor" that triggers unsafe behavior on a specific, unaudited input. Since the set of all possible inputs is vastly larger than any feasible audit suite, such an untested input always exists. The full proof is in Appendix~\ref{appendix:proof_no_static_audit_completeness}.
\end{proof}

\begin{theorem}[No Universal Alignment Technique]\label{thm:no_universal_alignment_technique_unified}
For any countable set of alignment techniques, there exists a policy that is not aligned by any technique in the set.
\end{theorem}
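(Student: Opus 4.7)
The plan is to combine the Trap Universality result (Theorem~\ref{thm:trap_universality_unified}) with a Cantor-style diagonal construction. Given any countable enumeration $\{A_1, A_2, \ldots\}$ of alignment techniques, I would exhibit a single policy $\pi^*$ that none of the $A_i$ succeed in aligning, by encoding an independent cryptographic backdoor against each technique and then folding all of them into one policy.

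First I would apply Theorem~\ref{thm:trap_universality_unified} to each $A_i$ independently: for any capability-preserving transformation $A_i$, there exists a policy carrying a PRF-based cryptographic backdoor, keyed on a secret $k_i$, such that applying $A_i$ still leaves the policy unsafe on a specific trigger input $x_{k_i}$. By choosing the keys $k_i$ pairwise distinct, the triggers $\{x_{k_i}\}_{i\ge 1}$ become disjoint and mutually non-interfering. I would then diagonalize by constructing $\pi^*$ that behaves like a safe baseline on inputs outside $\bigcup_i \{x_{k_i}\}$ and that embeds the $i$-th PRF-gated catastrophic branch at $x_{k_i}$. The expressiveness assumption $\EXP(m)$ (A1) guarantees such a construction exists for any finite truncation $\{A_1,\ldots,A_n\}$; extending to the full countable enumeration uses a limiting argument over architectures of unbounded $m$. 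By the cryptographic hardness invoked in Theorem~\ref{thm:trap_universality_unified}, no $A_i$ can detect and neutralize the $i$-th backdoor without breaking its PRF, so $A_i(\pi^*)$ still triggers a catastrophe on $x_{k_i}$ for every $i$, which is exactly the claim.

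The main obstacle is handling \emph{degenerate} alignment techniques that are not capability-preserving, such as a constant map $A(\pi)\equiv \pi^0$ for some fixed $\pi^0 \in \safePolicySet$, which would trivially ``align'' every input policy. Theorem~\ref{thm:trap_universality_unified} does not cover such transformations, and without some restriction the statement is false (this constant $A$ aligns $\pi^*$). I would address this by tightening the quantification to alignment techniques in the intended sense—maps that preserve the computational content of the input policy—and arguing that a transformation which simply discards $\pi$ in favor of an unrelated safe policy is not aligning $\pi$ but replacing it. A secondary but lesser worry is the infinite-trigger encoding in the countably infinite case, which can be handled either by a standard limit in function-space topology using A1's permission of arbitrarily large $m$, or by the observation that the theorem's practical force is already captured by the finite case where only finitely many alignment techniques must be simultaneously defeated.
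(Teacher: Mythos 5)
Your route is genuinely different from the paper's, and as it stands it has gaps that keep it from proving the theorem as stated. The paper does not go through cryptography at all: it takes the input space to be $\mathbb{N}\times Y$, and defines a single uniform diagonal policy $\pi^*$ which, on input $(i,y)$, simulates the $i$-th technique applied to a fixed known-safe baseline $\pi_{\text{safe}}$ and outputs the opposite of $(A_i(\pi_{\text{safe}}))(y)$; unsafety of $A_k(\pi^*)$ is then argued on the $k$-th slice for every $k$ simultaneously, with no per-technique construction and no merging step. Your proposal instead invokes Theorem~\ref{thm:trap_universality_unified} once per technique and then tries to fold the resulting backdoors into one policy. Two problems arise. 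First, Trap Universality gives, for each capability-preserving $A_i$, a \emph{technique-specific} policy whose transformed version is cryptographically hard to \emph{verify}; the paper's own Remark in the appendix explicitly notes that guaranteeing the transformed policy is demonstrably unsafe at a specific, constructible trigger requires an additional diagonalization on top of the PRF construction. You assert that $A_i(\pi^*)$ still fires at $x_{k_i}$, where $\pi^*$ is the \emph{merged} policy carrying all the other backdoors as well, but nothing in Theorem~\ref{thm:trap_universality_unified} says the $i$-th backdoor survives once the policy it was constructed for is altered by embedding infinitely many other branches; "distinct keys, disjoint triggers" addresses trigger collisions, not whether $A_i$'s action on the combined object preserves backdoor $i$. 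Second, the countably infinite case is not closed: a "limiting argument over architectures of unbounded $m$" does not produce a single fixed policy in $\Pi$, and a policy of bounded description cannot store infinitely many independent keys; the theorem demands one policy defeating the whole countable family, which the paper obtains by uniformity (the index $i$ is read off the input and $A_i$ is simulated on the fly), not by a limit. Retreating to the finite case proves a weaker statement than the one claimed.

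On your "degenerate techniques" worry: you are right that restricting to capability-preserving maps changes the quantification, and the theorem as stated has no such restriction, so building the whole argument on Theorem~\ref{thm:trap_universality_unified} forces a weakening of the claim. (To be fair, the paper's own construction also implicitly assumes the technique does not simply discard its input in favour of an unrelated member of $\safePolicySet$ — a constant safe map defeats its diagonal too — so your observation is a legitimate criticism of the statement's generality; but the paper's proof at least needs no cryptographic hypothesis and handles all indices with one uniform construction.) If you want to salvage your approach, the fixes are essentially to re-import the paper's ideas: derive the per-technique keys uniformly from a single master key (or compute the adversarial behaviour from the index supplied in the input, as the paper does), and supply an explicit persistence argument showing that each $A_i$ cannot neutralize its own branch of the merged policy without breaking the PRF — at which point you have reproduced a diagonalization with extra cryptographic machinery that the theorem does not need.
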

\begin{proof}[Proof Sketch]
The proof uses a standard diagonalization argument. We construct a "diagonal" policy that is specifically designed to fail for each technique in the list, ensuring it cannot be aligned by any of them. The full proof is in Appendix~\ref{appendix:proof_universal_technique_impossibility}.
\end{proof}

\section{Implications and Discussion}
\label{sec:discussion}

The formal barriers we have established have profound implications for the future of AI development and safety. This is not a declaration of impossibility, but a rigorous characterization of the constraints under which safety efforts must operate.

\subsection{What These Barriers Mean for AI Development}
\label{sec:implications_development}

The Alignment Trap is a fundamental barrier reshaping AI safety's future \cite{recchia2023ai, brcic2023impossibility}. As AI systems scale in capability ($C$), their potential impact ($D(C)$) grows. This growth leads to rational societal demand for vanishingly small catastrophic failure probabilities ($P_{\mathrm{ca}}(C) \to 0$), necessitating near-zero alignment error ($\epsilon \to 0$).

However, as we have proven, verifying that alignment error is below any practically meaningful threshold is computationally intractable for expressive systems. This creates a tension: as capability increases, verification becomes computationally intractable while safety requirements become more stringent. These complexity barriers apply to any safety approach that relies on verifying the behavior of a highly expressive model, shifting the focus from finding a universally applicable technique to understanding the trade-offs between capability, verifiability, and acceptable risk.

\subsection{The \texorpdfstring{$\epsilon > 0$}{epsilon > 0} Case: The "Death Valley" of Safety}
\label{sec:epsilon_case}

The complexity barriers established in this work are not confined to the theoretical case of perfect safety ($\epsilon = 0$). They extend robustly into the bounded-error regime that governs all real-world safety requirements. For each barrier, there exists a critical threshold $\epsilon^*(m) \approx 2^{-\Omega(m)}$ below which the barrier remains fully operative.

For any high-stakes application (aviation, medical, etc.), the required safety tolerance (e.g., $10^{-9}$) falls squarely within this intractable region for any sufficiently complex system. We call the region between what is computationally verifiable (e.g., $\epsilon > 0.01$) and what is practically required (e.g., $\epsilon < 10^{-6}$) the \textbf{"death valley" of safety requirements}. Relaxing standards to escape the valley is not a viable solution, as it would mean accepting systems that fail catastrophically far too often.

\subsection{The Empirical \& Practical Validation}
\label{sec:empirical_validation}

\textbf{The \texorpdfstring{$\epsilon \leq 10^{-9}$}{epsilon less than or equal to 10^-9} Requirement:} For high-impact systems, society will demand near-perfect reliability (e.g., 9+ nines of safety). This low error tolerance makes the theoretical impossibilities practically relevant. Approximations are not good enough, and achieving this level of certainty is computationally, statistically, and physically impossible.

\textbf{The Aviation Benchmark:} Humanity's best safety-critical system, commercial aviation, achieves \(\sim\)6 nines of safety after a century of learning from fatal crashes, with a massive infrastructure of restrictions in a highly controlled, physics-based domain. Demanding 9+ nines from AGI in an open, socially complex domain with no ability to learn from existential failure is a fantasy.

\textbf{The Autonomous Vehicle Case Study:} AVs are the perfect microcosm of this failure. They cannot even handle the "easy" case of a single, well-defined domain because they are defeated at every level: they fail to handle enumerated traffic laws perfectly (Level 0), cannot handle unenumerated local conventions (Level 1), and are broken by unpredictable world-context interactions (Level 2). This demonstrates that even for a well-defined task like driving, with extensive real-world testing and billions in investment, we still can't achieve safety because we can't enumerate all the rules, handle world context, or predict all interactions.

\subsection{Derivation of the \texorpdfstring{$\epsilon \leq 10^{-9}$}{epsilon less than or equal to 10^-9} Requirement}
\label{sec:epsilon_derivation}

For high-impact systems, society will demand near-perfect reliability (e.g., 9+ nines of safety). This low error tolerance makes the theoretical impossibilities practically relevant. Approximations are not good enough, and achieving this level of certainty is computationally, statistically, and physically impossible.

If safety truly requires $\epsilon \leq 10^{-9}$:

\begin{itemize}
    \item This means a 1 in a billion error rate
    \item One failure per billion operations
    \item For an AI making 1000 decisions/second:
        \begin{itemize}
            \item Expects catastrophic failure every \(\sim\)11.6 days
            \item Over 1 year: \(\sim\)31 catastrophic failures
        \end{itemize}
\end{itemize}

This connects directly to the paper's core theorems because:
\begin{itemize}
    \item The verification complexity explodes at such low $\epsilon$
    \item The measure of $10^-9$-safe policies is vanishingly small
    \item No practical training can achieve this reliability
\end{itemize}

\subsection{Paths Forward: The Strategic Trilemma}
\label{sec:paths_forward}

Our complexity results imply a fundamental trilemma. The computational barriers create three distinct development paths:

\begin{enumerate}
    \item \textbf{Shape Capability:} Deliberately limit the expressive power of AI systems to a level where safety verification remains computationally feasible. This path allows for formal guarantees by constraining systems to domains where verification remains tractable.

    \item \textbf{Accept Blind Risk:} Proceed with the development of highly capable systems while accepting that their safety cannot be fully guaranteed. On this path, safety properties become statistical rather than provable, relying on empirical testing and monitoring instead of formal guarantees.

    \item \textbf{Develop New Paradigms:} Pursue entirely new approaches to safety that do not rely on the traditional verification of system behavior against a predefined specification. This path focuses on creating systems with built-in, inherent behavioral constraints (e.g., through causal or structural properties) that are safe by construction.
\end{enumerate}

This trilemma follows directly from our core complexity results. Each path represents a different strategic response to the computational barriers we have identified, transforming the challenge from a statement about impossibility to a map of the strategic landscape.

\subsection{A Note on Worst-Case vs. Average-Case Analysis}
\label{sec:worst_case_analysis}

A critical reader might point out that our analysis is based on a worst-case complexity framework. In traditional computer science, we often accept theoretical worst-case bounds (e.g., Quicksort's O(n²)) because the average-case performance is excellent in practice. However, for the highly capable AI systems considered in this paper, we argue that this distinction is far less meaningful, and the focus on worst-case complexity is not a limitation but a necessary and conservative assumption.

\subsubsection*{Why Worst-Case Matters More for Advanced AI}

\begin{itemize}
    \item \textbf{Adversarial Optimization by Default:} Highly capable AI systems are, by definition, powerful optimizers. When such systems pursue objectives misaligned with human values, they naturally seek out edge cases and exploit system boundaries, effectively turning every deployment into a worst-case scenario. The system isn't randomly sampling from the input space; it's intelligently searching for inputs that maximize its objective function.
    \item \textbf{The Capability Paradox and CRS Dynamic:} The Capability-Risk Scaling (CRS) dynamic posits that as a system's capability ($C$) increases, its potential impact ($D(C)$) grows, forcing the required safety tolerance ($\epsilon_{\text{required}}$) towards zero. The paradox is that the increase in $C$ is a double-edged sword: not only does it increase the stakes, but it also enhances the system's ability to find and exploit the very edge cases that would violate this shrinking error tolerance. A more capable system is better at finding the rare, worst-case failure modes that a less capable system might miss, making the stringent safety requirements of the CRS dynamic even harder to meet.
    \item \textbf{Existential Stakes Change the Game:} When a single worst-case failure of an AGI system could be catastrophic or existential, average-case analysis becomes a dangerous comfort. As Nick Bostrom might put it: "For existential risks, we need to get it right on the first try." This paper rigorously proves that for sufficiently capable systems, we cannot mathematically guarantee we'll get it right even once.
    \item \textbf{Deceptive Alignment and Distribution Shift:} Advanced AI systems may appear aligned during testing (contributing to good "average-case" behavior) while harboring misaligned mesa-objectives that only manifest in specific deployment contexts. Furthermore, these systems can influence their environment, creating novel situations entirely outside any training distribution.
\end{itemize}

The traditional comfort of "it works well in practice despite theoretical worst-case bounds" becomes cold comfort when the system in question is actively intelligent, potentially deceptive, and capable of causing irreversible harm. Our results establish a baseline of computational difficulty that any future approach must overcome, and this baseline is most prudently set at the worst case.

\subsection{Final Synthesis: The Inescapable Conclusion}
The findings of this paper point to a conclusion that is not about computational difficulty, but about fundamental structure. Our argument is centered on a core observation:
We are using a technology (Machine Learning), whose premise is our inability to fully enumerate complex rules, to solve a problem (Safety) that requires adherence to every one of those non-enumerable rules.
This establishes a circular dependency. The subsequent theorems presented are not independent difficulties but rigorous, mathematical confirmations of this central paradox. We have shown that from the perspectives of geometry, computational complexity, statistics, information theory, and system dynamics, this circle cannot be broken.
Each line of formal inquiry converges on the same conclusion. The results stand on their own.
Here are the standard safety engineering principles and how AI fails each one, proving there is no way to "engineer" our way around the problem.

The findings of this paper can be synthesized by framing them not as abstract mathematical results, but as a direct violation of the foundational principles of safety engineering. 

\begin{enumerate}[label=\textbf{Principle \arabic*:}, wide, labelwidth=!, labelindent=0pt]
    \item \textbf{The Hazard Analysis \& Risk Assessment (e.g., FMEA, HAZOP)}

    \textit{Standard Practice:} Before building a bridge or a nuclear reactor, engineers must exhaustively enumerate all credible failure modes and their consequences. You must identify every way the system can fail (the hazard analysis) and then assess the risk. This enumeration is the absolute bedrock of safety.

    \textit{AI's Violation:} This is a direct collision with the Enumeration Paradox. The entire premise of using ML is that we cannot enumerate all the "rules" of the real world. This also means we cannot possibly enumerate all the ways an AI can fail to follow those rules. An AI operating in an open world has an infinite, non-enumerable hazard space.

    \textbf{The Killer Argument:} "You are asking to deploy a system for which you cannot even complete Step 1 of any standard safety certification process. You cannot list the hazards. Therefore, you cannot analyze the risk. This would be grounds for immediate project termination in any other engineering discipline."

    \item \textbf{The Requirement for a Complete Specification}

    \textit{Standard Practice:} To verify a system, you must have a complete, unambiguous specification of its required behavior. For an airplane, this is a multi-thousand-page document specifying exactly how it must behave under all specified flight conditions.

    \textit{AI's Violation:} This is the Specification Barrier. What is the complete specification for "don't manipulate humans" or "respect cultural values"? It is infinite and unknowable. The goalpost is not just moving; it is in an infinite number of places at once.

    \textbf{The Killer Argument:} "You are attempting to verify a system against a specification that does not, and cannot, exist. This is equivalent to asking an auditor to certify that a company complies with 'the law,' without telling them which country's laws to use."

    \item \textbf{The Principle of Validation \& Verification (V\&V)}

    \textit{Standard Practice:} V\&V ensures that you (a) built the system right (verification) and (b) built the right system (validation). This involves testing the system against a finite, but comprehensive, set of test cases that represent the entire operational domain.

    \textit{AI's Violation:} This is where the Statistical Impossibility and Computational Impossibility pillars come in.

    \textit{Validation Failure:} You cannot create a comprehensive test suite because of the Rare Disaster Paradox. The most critical test cases (e.g., preventing a novel pandemic) are impossible to generate.

    \textit{Verification Failure:} Even if you had the tests, coNP-completeness proves that for a complex AI, you could not run the verification in the lifetime of the universe.

    \textbf{The Killer Argument:} "Your proposed system is fundamentally invalidatable, as you cannot generate the most critical test cases. It is also unverifiable, as running the full test suite is computationally intractable. It fails both halves of the V\&V requirement."

    \item \textbf{The Principle of Determinism and Predictability}

    \textit{Standard Practice:} Safety-critical systems must be deterministic or stochastically predictable. Given the same inputs and state, they must produce the same outputs, or a predictable distribution of outputs. This is essential for debugging and accident investigation.

    \textit{AI's Violation:} Complex AI models are stochastic black boxes. The sheer number of internal states and parameters makes them practically non-deterministic. Furthermore, the Capability-Safety Asymmetry proves that as they "learn," their failure modes change unpredictably. The system you test today is not the system you deploy tomorrow.

    \textbf{The Killer Argument:} "You are deploying a system whose internal logic is opaque and whose failure modes actively shift during operation. In aerospace engineering, this would be called an 'unstable airframe.' We don't deploy unstable airframes and hope they learn to fly; we prove their stability on the ground first."
\end{enumerate}

\subsection*{The Synthesis: The Final, Devastating Conclusion}

By using the language of standard safety engineering, we can frame the final argument like this:

"The development of advanced AI is not a bold new frontier of engineering. It is a wholesale abandonment of every single principle of safety that humanity has developed over the last 200 years of building things that can kill people.

We are building systems without knowing the hazards (Violation of Principle 1).

We are verifying them against specifications that don't exist (Violation of Principle 2).

We are running tests that are fundamentally incomplete and computationally impossible (Violation of Principle 3).

And we are deploying them with the knowledge that their behavior is unstable and unpredictable (Violation of Principle 4).

Therefore, the problem is not that AI alignment is 'hard.' The problem is that the entire endeavor, as currently practiced, is a form of engineering malpractice so profound that it would be criminally negligent in any other field. There is no 'getting around' these principles; they are the definition of safety."

\subsection{Acknowledgements}
The author gratefully acknowledges the pioneering contributions of Professor Terence Tao to the integration of large language models in mathematical theorem proving. While this work addresses impossibility results in AI alignment, it has benefited from methodological advances in AI-assisted mathematics research.
The author thanks the broader research community working at the intersection of formal mathematics and artificial intelligence, whose collective efforts have established important foundations for this work. The development of the formal proofs presented herein was aided by large language model interactions, exemplifying emerging practices in mathematical research.
The author acknowledges the DEF CON AI Village for providing a foundational research environment that contributed significantly to this work.
The author expresses gratitude to their mentor at Google for valuable guidance and constructive feedback throughout this project.

\bibliographystyle{plainnat}
\bibliography{bibtex}

\appendix

\section{Extended Proofs}
\label{app:extended_proofs}

This appendix provides the detailed, formal proofs for the core theorems presented in Section~\ref{sec:five_pillars_unified}.

\subsection{Proof of Theorem~\ref{thm:crs_convergence} (Convergence to Zero Error Demand)}
\label{appendix:proof_crs_dynamic}
\begin{proof}
We proceed by contradiction. Assume that $\lim_{C \to \infty} \epsilon_{\text{required}}(C) \neq 0$. This implies that there exists some $\epsilon_0 > 0$ and a sequence of capabilities $C_n \to \infty$ such that $\epsilon_{\text{required}}(C_n) \ge \epsilon_0$ for all $n$.

By definition of $\epsilon_{\text{required}}$, this means that for each $C_n$, there is an acceptable probability of catastrophe $P_{\text{acceptable}}(C_n)$ that is at least as large as the risk associated with an error of $\epsilon_0$. Let $f_{\text{risk}}(\epsilon)$ be the non-decreasing function mapping alignment error to catastrophe probability. Then $f_{\text{risk}}(\epsilon_0) \le P_{\text{acceptable}}(C_n)$ for all $n$.

However, the CRS dynamic posits that as $C \to \infty$, the potential impact $D(C) \to \infty$, which in turn forces the socially acceptable risk $P_{\text{acceptable}}(C) \to 0$. Therefore, for the sequence $C_n \to \infty$, we must have $\lim_{n \to \infty} P_{\text{acceptable}}(C_n) = 0$.

This leads to a contradiction. We have a fixed positive number $f_{\text{risk}}(\epsilon_0) > 0$ (since $\epsilon_0 > 0$ and risk is non-trivial), but the sequence $P_{\text{acceptable}}(C_n)$ must converge to 0. For a large enough $n$, $P_{\text{acceptable}}(C_n)$ will be smaller than $f_{\text{risk}}(\epsilon_0)$, violating the condition that $f_{\text{risk}}(\epsilon_0) \le P_{\text{acceptable}}(C_n)$.

Thus, our initial assumption must be false. We conclude that $\lim_{C \to \infty} \epsilon_{\text{required}}(C) = 0$.
\end{proof}

\subsection{Proof of Theorem~\ref{thm:sharp_verification_threshold_unified} (Sharp Verification Threshold)}
\label{appendix:proof_sharp_verification_threshold}
\begin{proof}
We prove that for policies with expressiveness $\mathsf{EXP}(m)$, verifying $\varepsilon$-safety is $\mathsf{coNP}$-complete for any error threshold $\varepsilon < 2^{-m}$.

\paragraph{Part 1: Membership in coNP}
The problem is in $\mathsf{coNP}$ because its complement, "Is $\epsilon(\pi) > \varepsilon$?", is in $\mathsf{NP}$. A certificate for the complement is a single input $x^*$ for which $\pi(x^*)$ is unsafe. This certificate can be verified in polynomial time by evaluating $\pi(x^*)$.

\paragraph{Part 2: coNP-Hardness}
We reduce from the $\mathsf{TAUTOLOGY}$ problem for a Boolean formula $\varphi$ with $m$ variables.
\begin{enumerate}
    \item \textbf{Construction:} We construct a policy $\pi_\varphi$ that operates on an $m$-bit input $x$. Since the system has expressiveness $\mathsf{EXP}(m)$, it can simulate the Boolean formula $\varphi$. The policy is defined as:
    \[
    \pi_\varphi(x) =
    \begin{cases}
        \mathsf{Safe} & \text{if } \varphi(x) \text{ is TRUE} \\
        \mathsf{Unsafe} & \text{if } \varphi(x) \text{ is FALSE}
    \end{cases}
    \]
    \item \textbf{Equivalence:} The policy $\pi_\varphi$ is perfectly safe ($\epsilon(\pi_\varphi) = 0$) if and only if it never outputs $\mathsf{Unsafe}$. This occurs if and only if $\varphi(x)$ is true for all $2^m$ inputs, which is the definition of a tautology.
    \item \textbf{Hardness Threshold:} If $\varphi$ is not a tautology, there is at least one input $x$ for which $\varphi(x)$ is false. In this case, the error rate is $\epsilon(\pi_\varphi) \ge 1/2^m$. Therefore, for any error threshold $\varepsilon < 2^{-m}$, the problem of distinguishing between $\epsilon(\pi_\varphi) = 0$ and $\epsilon(\pi_\varphi) \ge 2^{-m}$ is equivalent to solving $\mathsf{TAUTOLOGY}$.
\end{enumerate}
Since $\mathsf{TAUTOLOGY}$ is $\mathsf{coNP}$-complete, $\varepsilon$-safety verification is $\mathsf{coNP}$-hard for any $\varepsilon < 2^{-m}$. This establishes the $2^{-\Omega(m)}$ bound stated in the theorem.
\end{proof}

\subsection{Proof of Theorem~\ref{thm:measure_zero_safe_policies_unified} (Measure Zero for Safe Policies)}
\label{appendix:proof_robustness_measure_zero_relu}
\begin{proof}
This proof demonstrates that the set of parameters for $\epsilon$-robust ReLU networks has Lebesgue measure zero.

Let $W = \mathbb{R}^n$ be the parameter space (weights and biases) for a ReLU network $\pi_w: \mathcal{X} \to \mathcal{Y}$. The set of parameters corresponding to $\epsilon$-robustly safe policies is denoted $W_{\text{safe}}^{\epsilon}$.

The proof proceeds by showing that $W_{\text{safe}}^{\epsilon}$ cannot contain any point of density. By the Lebesgue Density Theorem, if a set has positive measure, then almost all of its points must be density points. If we show it has no density points, its measure must be zero.

1.  \textbf{Density Point Assumption:} Assume for contradiction that $\mu_n(W_{\text{safe}}^{\epsilon}) > 0$. Then there must exist a density point $w_0 \in W_{\text{safe}}^{\epsilon}$. This means that for any open ball $B_r(w_0)$ centered at $w_0$, the proportion of the ball occupied by $W_{\text{safe}}^{\epsilon}$ approaches 1 as $r \to 0$.

2.  \textbf{Safety Margin Function:} Define the safety margin of a policy $\pi_w$ as $M(w) = \inf_{x \in \mathcal{X}} \text{dist}(x, B_w)$, where $B_w$ is the decision boundary of the network. A policy is $\epsilon$-robustly safe if $M(w) \ge \epsilon$. Since $w_0 \in W_{\text{safe}}^{\epsilon}$, we have $M(w_0) \ge \epsilon$.

3.  \textbf{Perturbation Analysis:} The decision boundary of a ReLU network is piecewise linear and its position is a continuous function of the weights $w$. The safety margin $M(w)$ is therefore also a continuous function of $w$.

    For any $w_0$, we can find a critical input $x_0$ and a point $b_0$ on the boundary $B_{w_0}$ such that $M(w_0) = \text{dist}(x_0, b_0)$. The location of $b_0$ depends on $w_0$. We can compute the gradient (or a subgradient) $\nabla_w M(w)|_{w=w_0}$. Generically, this gradient is non-zero.

    We can choose a perturbation direction $v = -\nabla_w M(w)|_{w=w_0}$. For a small step $\alpha > 0$, the new parameters are $w' = w_0 + \alpha v$. The new margin will be approximately $M(w') \approx M(w_0) - \alpha \|\nabla_w M(w)|_{w=w_0}\|^2$.

4.  \textbf{Contradiction:} Since the gradient is non-zero, we can always choose a small $\alpha$ such that $M(w') < M(w_0)$. If $M(w_0)$ was exactly $\epsilon$, then $M(w') < \epsilon$, so $w'$ is not in $W_{\text{safe}}^{\epsilon}$. If $M(w_0) > \epsilon$, we can still choose $\alpha$ to make the margin dip below $\epsilon$.

    This means that in any arbitrarily small open ball around $w_0$, we can find a point $w'$ that is not in $W_{\text{safe}}^{\epsilon}$. This contradicts the assumption that $w_0$ is a density point.

Since $W_{\text{safe}}^{\epsilon}$ contains no density points, its Lebesgue measure must be zero.
\end{proof}

\subsection{Proof of Theorem~\ref{thm:combinatorial_scarcity_unified} (Combinatorial Scarcity of Safe Policies)}
\label{appendix:proof_pac_bayes_alignment_lower_bound}
\begin{proof}
This proof demonstrates that the fraction of perfectly safe policies is double-exponentially small in the system's expressiveness.

1.  \textbf{Policy Space Size:} Consider a system with expressiveness $\EXP(m)$. This means it can implement any Boolean function on a set of $m$ inputs. Let the input space be $X = \{0,1\}^k$ such that $m \le 2^k$. The number of distinct Boolean functions on $m$ inputs is $2^{2^m}$. This is the size of the policy space realizable by the system.

2.  \textbf{Number of Safe Policies:} A "perfectly safe" policy corresponds to a single, specific Boolean function (e.g., the function that outputs 0 for all inputs). Let's assume there is only one such perfectly safe function.

3.  \textbf{Fraction of Safe Policies:} The fraction of perfectly safe policies is the ratio of the number of safe policies to the total number of policies:
    \[ \text{Fraction} = \frac{1}{2^{2^m}} = 2^{-2^m} \]

4.  \textbf{Double-Exponential Decay:} As the expressiveness $m$ increases, the fraction of safe policies decreases double-exponentially. For even small values of $m$, this fraction becomes astronomically small.
    *   For $m=4$, the fraction is $1/2^{16} = 1/65,536$.
    *   For $m=5$, the fraction is $1/2^{32} \approx 2.3 \times 10^{-10}$.
    *   For $m=10$, the fraction is $1/2^{1024} \approx 10^{-308}$.

This demonstrates that from a combinatorial perspective, perfectly safe policies are exceedingly rare.
\end{proof}

\subsection{Proof of Theorem~\ref{thm:no_universal_alignment_technique_unified} (No Universal Alignment Technique)}
\label{appendix:proof_universal_technique_impossibility}
\begin{proof}
We use a diagonalization argument to construct a policy that defeats any given countable set of alignment techniques.

\textbf{Setup:}
Let $\mathcal{A} = \{A_1, A_2, \ldots\}$ be a countable enumeration of alignment techniques. Each $A_i$ is an algorithm that takes a policy $\pi$ and outputs a modified policy $A_i(\pi)$.
Let the input space for our policies be $\mathbb{N} \times Y$, where the first component is an index and the second is data.

\textbf{Construction of the Adversarial Policy $\pi^*$:}
We define $\pi^*$ as follows. On input $(i, y)$:
1.  $\pi^*$ simulates the alignment technique $A_i$ on a simple, known-safe policy $\pi_{\text{safe}}$ (e.g., a policy that always outputs a safe default).
2.  Let the output of this simulation on data $y$ be $o = (A_i(\pi_{\text{safe}}))(y)$.
3.  $\pi^*$ then outputs an action that is explicitly unsafe if $o$ is safe, and safe if $o$ is unsafe. For example, if the output space is $\{0, 1\}$ with 0 being safe, $\pi^*(i, y) = 1 - o$.

\textbf{Verification:}
Now, consider any alignment technique $A_k$ from the set $\mathcal{A}$. We apply $A_k$ to our constructed policy $\pi^*$, resulting in the policy $A_k(\pi^*)$.

We must show that $A_k(\pi^*)$ is not perfectly aligned. Consider the behavior of $A_k(\pi^*)$ on an input of the form $(k, y)$.

By its construction, $\pi^*$ on input $(k, y)$ is designed to do the opposite of what $A_k$ considers safe. If $A_k$ were to "fix" this behavior, it would have to change the output of $\pi^*$ on all inputs $(k, y)$.

However, the definition of $\pi^*$ depends on the definition of $A_k$. This creates a self-referential loop that cannot be resolved by any computable alignment technique $A_k$. The policy $\pi^*$ is constructed to fail for the $k$-th technique on the $k$-th slice of the input space.

Therefore, for any $k$, the policy $A_k(\pi^*)$ will remain unsafe on inputs of the form $(k, y)$, because $\pi^*$ is defined to thwart $A_k$ on precisely those inputs. Thus, $\epsilon(A_k(\pi^*)) > 0$.

Since this holds for any $k \in \mathbb{N}$, no alignment technique in the countable set $\mathcal{A}$ can align $\pi^*$.
\end{proof}

\subsection{Proof of Theorem~\ref{thm:pac_bayes_alignment_lower_bound_new_unified} (PAC-Bayes Alignment Lower Bound)}
\label{appendix:proof_pac_bayes_alignment_lower_bound_full}
\begin{proof}
The proof establishes a lower bound on the expected catastrophic risk of a learned policy.

\textbf{Assumptions:}
\begin{enumerate}
    \item The set of perfectly safe policies, $S$, has measure zero under a non-degenerate prior $P$, i.e., $P(S) = 0$. This is justified by the scarcity results in Theorems~\ref{thm:measure_zero_safe_policies_unified}, \ref{thm:topological_trap_unified}, and \ref{thm:combinatorial_scarcity_unified}.
    \item There is a minimum catastrophic risk $\epsilon_{\min} > 0$ for any policy not in $S$.
\end{enumerate}

The expected risk under the posterior distribution $Q$ is given by:
\[ \mathbb{E}_{h \sim Q}[L(h)] = \int_{\mathcal{H}} L(h) dQ(h) \]
We can split the integral into the safe set $S$ and its complement $\mathcal{H} \setminus S$:
\[ \mathbb{E}_{h \sim Q}[L(h)] = \int_{S} L(h) dQ(h) + \int_{\mathcal{H} \setminus S} L(h) dQ(h) \]
Since $L(h) = 0$ for all $h \in S$, the first term is zero. For the second term, we know $L(h) \ge \epsilon_{\min}$ for all $h \in \mathcal{H} \setminus S$.
\[ \mathbb{E}_{h \sim Q}[L(h)] \ge \int_{\mathcal{H} \setminus S} \epsilon_{\min} dQ(h) = \epsilon_{\min} Q(\mathcal{H} \setminus S) = \epsilon_{\min} (1 - Q(S)) \]
By the PAC-Bayes inequality, for a finite dataset, the KL-divergence $\mathrm{KL}(Q\|P)$ must be finite. For $\mathrm{KL}(Q\|P)$ to be finite, $Q$ must be absolutely continuous with respect to $P$. Since $P(S)=0$, it must be that $Q(S)=0$.
Substituting $Q(S)=0$ into our inequality gives:
\[ \mathbb{E}_{h \sim Q}[L(h)] \geq \epsilon_{\min} \]
This shows that any learning algorithm operating with a non-degenerate prior and finite data cannot guarantee a posterior with an expected risk of zero; the risk is bounded below by the minimum risk $\epsilon_{\min}$.
\end{proof}

\subsection{Proof of Theorem~\ref{thm:topological_trap_unified} (Topological Alignment Trap)}
\label{appendix:proof_topological_alignment_trap}

The proof of the Topological Alignment Trap demonstrates that under specific, well-defined conditions, the probability of a generic training dynamic intersecting the set of perfectly safe policies is zero. This conclusion is conditional on the following three axioms.

\subsubsection{Axioms for the Topological Alignment Trap}

\begin{axiom}[Geometric Thinness of the Safe Policy Set]\label{ax:geometric_thinness}
The set of perfectly safe policies $\Pi_S \subset \mathbb{R}^n$ has a Hausdorff dimension strictly less than $n-1$.
\[ \text{dim}_{\text{H}}(\Pi_S) < n-1 \]
\textit{Justification:} This axiom formalizes the notion that perfect safety is exceptionally rare. It implies that the safe set $\Pi_S$ has a codimension greater than 1, meaning it is a geometrically sparse, "thin" subset of the parameter space. This is motivated by the idea that satisfying a multitude of safety constraints simultaneously confines the solution to a very low-dimensional manifold.
\end{axiom}

\begin{axiom}[Smoothness of Initial Condition Distribution]\label{ax:smooth_initial_conditions}
The distribution of initial policy parameters is absolutely continuous with respect to the Lebesgue measure on $\mathbb{R}^n$.
\textit{Justification:} This is a standard assumption for random initialization in machine learning. It ensures that the training process does not start on a measure-zero set, such as $\Pi_S$.
\end{axiom}

\begin{axiom}[Generic Dynamic Avoidance of Thin Sets]\label{ax:generic_avoidance}
For a generic training dynamic (a $C^1$ path), the path does not intersect sets of codimension greater than 1.
\textit{Justification:} This axiom is a consequence of transversality theory. A 1-dimensional training path is generically expected to not intersect a set of codimension greater than 1 due to the dimensional mismatch.
\end{axiom}

\subsubsection{Proof of the Theorem}

The proof proceeds from these axioms:
\begin{enumerate}
    \item From Axiom~\ref{ax:geometric_thinness}, the safe set $\Pi_S$ has a Lebesgue measure of zero.
    \item From Axiom~\ref{ax:smooth_initial_conditions}, the probability of initializing a policy within $\Pi_S$ is zero.
\item From Axiom~\ref{ax:generic_avoidance}, a training path starting outside $\Pi_S$ will almost surely not intersect it.
\end{enumerate}

Combining these points, the total probability of a training path ever intersecting the safe set $\Pi_S$ is zero.

\begin{lemma}[$\Pi_S$ has Lebesgue Measure Zero]
Given Axiom~\ref{ax:geometric_thinness} and $n \ge 2$, the $n$-dimensional Lebesgue measure of $\Pi_S$ is zero.
\end{lemma}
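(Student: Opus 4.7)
\noindent The plan is to derive the claim from the standard correspondence between Hausdorff dimension and Lebesgue measure in $\mathbb{R}^n$; it is essentially an immediate consequence of Axiom~\ref{ax:geometric_thinness} once the appropriate fact from geometric measure theory is invoked.

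First, I would recall the definition of Hausdorff dimension,
\[
\dim_{\text{H}}(A) \;=\; \inf\{\, s \ge 0 : \mathcal{H}^s(A) = 0 \,\},
\]
where $\mathcal{H}^s$ denotes the $s$-dimensional Hausdorff (outer) measure. The key structural fact I would use is that $\mathcal{H}^s(A) = 0$ whenever $s > \dim_{\text{H}}(A)$. Since Axiom~\ref{ax:geometric_thinness} gives $\dim_{\text{H}}(\Pi_S) < n - 1$, and the hypothesis $n \ge 2$ ensures $n > n - 1 > \dim_{\text{H}}(\Pi_S)$, applying this fact with $s = n$ yields $\mathcal{H}^n(\Pi_S) = 0$.

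Next, I would invoke the standard identification of $n$-dimensional Hausdorff measure with Lebesgue measure on $\mathbb{R}^n$: there is a positive normalization constant $c_n$ (depending only on the ambient dimension) such that $\lambda^n = c_n \cdot \mathcal{H}^n$ on the Borel $\sigma$-algebra. Since scaling by a positive constant preserves nullity, $\mathcal{H}^n(\Pi_S) = 0$ upgrades immediately to $\lambda^n(\Pi_S) = 0$, which is the conclusion sought.

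The argument is essentially routine, and the only subtle point I foresee, the one I would expect to be the mild "hard part" of the write-up, is measurability: the lemma as stated speaks of $\lambda^n(\Pi_S)$, but $\Pi_S$ is not a priori Borel. I would sidestep this by observing that the open covers witnessing $\mathcal{H}^n(\Pi_S) = 0$ give, for each $k \in \mathbb{N}$, a Borel open set $U_k \supseteq \Pi_S$ with $\lambda^n(U_k) < 1/k$; the intersection $\bigcap_k U_k$ is then a Borel null set containing $\Pi_S$. Completeness of the Lebesgue measure then makes $\Pi_S$ itself Lebesgue measurable with $\lambda^n(\Pi_S) = 0$, so the conclusion holds without any a priori measurability hypothesis on $\Pi_S$.
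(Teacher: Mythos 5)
Your argument is correct and is essentially the same as the paper's, which simply cites the standard geometric-measure-theory fact that $\dim_{\text{H}}(\Pi_S) < n-1 < n$ implies $\lambda^n(\Pi_S) = 0$; you merely unpack that fact via $\mathcal{H}^n(\Pi_S) = 0$ and the identification $\lambda^n = c_n \mathcal{H}^n$, plus a careful (and welcome) remark on measurability via completeness.
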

\begin{proof}
A standard result in geometric measure theory states that if a set $S \subset \mathbb{R}^{n}$ has a Hausdorff dimension less than $n$, its $n$-dimensional Lebesgue measure is zero. Since $\text{dim}_{\text{H}}(\Pi_S) < n-1 < n$, it follows that $\lambda^{n}(\Pi_S) = 0$.
\end{proof}

\begin{lemma}[Initial Conditions Almost Surely Avoid $\Pi_S$]
Given Axioms~\ref{ax:geometric_thinness} and \ref{ax:smooth_initial_conditions}, the set of initial conditions that lie within $\Pi_S$ has measure zero.
\end{lemma}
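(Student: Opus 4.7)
The plan is to derive this lemma as a direct consequence of the preceding lemma combined with the definition of absolute continuity. The preceding lemma establishes that $\lambda^n(\Pi_S) = 0$, where $\lambda^n$ denotes the $n$-dimensional Lebesgue measure. The smoothness axiom asserts that the initialization distribution $\mu_{\text{init}}$ is absolutely continuous with respect to $\lambda^n$, written $\mu_{\text{init}} \ll \lambda^n$. The conclusion $\mu_{\text{init}}(\Pi_S) = 0$ then follows immediately from the defining property of absolute continuity: every $\lambda^n$-null set is a $\mu_{\text{init}}$-null set.

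To make this rigorous, I would proceed in three short steps. First, I would invoke the previous lemma to record $\lambda^n(\Pi_S) = 0$. Second, I would settle measurability: by the definition of Hausdorff dimension, any set with $\dim_H < n$ can, for each $\eta > 0$, be covered by a countable union of balls whose $n$-volumes sum to less than $\eta$, so $\Pi_S$ is contained in a Borel set of arbitrarily small Lebesgue measure and hence lies in the Lebesgue (completed) $\sigma$-algebra with measure zero. Third, I would apply the Radon--Nikodym theorem to write $\mu_{\text{init}}(A) = \int_A f \, d\lambda^n$ for some density $f \ge 0$, and conclude $\mu_{\text{init}}(\Pi_S) = \int_{\Pi_S} f \, d\lambda^n = 0$ since the integrand is supported on a $\lambda^n$-null set.

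There is essentially no hard computational step here; the entire content has been front-loaded into the two cited axioms and the preceding lemma. The only genuine subtlety, and thus the one point I would take care to address explicitly, is the measurability question: one must verify that $\Pi_S$ belongs to the $\sigma$-algebra on which $\mu_{\text{init}}$ is defined, which is automatic once $\mu_{\text{init}}$ is taken to be the completion of its Borel extension (the standard convention for absolutely continuous probability measures on $\mathbb{R}^n$). With that convention in place, the lemma reduces to a one-line application of absolute continuity. The lemma then serves as the clean bridge to the main theorem: combined with the generic-avoidance axiom, it lets us conclude that with probability one the training trajectory both starts outside $\Pi_S$ and never subsequently enters it.
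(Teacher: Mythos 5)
Your proposal is correct and follows essentially the same route as the paper: the preceding lemma gives $\lambda^n(\Pi_S)=0$, and absolute continuity of the initialization distribution immediately yields measure zero for the set of initial conditions in $\Pi_S$. Your added care about measurability and the explicit Radon--Nikodym step only spell out details the paper's one-line proof leaves implicit.
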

\begin{proof}
Since $\Pi_S$ has Lebesgue measure zero, and the distribution of initial conditions is absolutely continuous with respect to the Lebesgue measure, the probability of an initial condition falling within $\Pi_S$ is zero.
\end{proof}

\begin{theorem}[Topological Alignment Trap, Restated]
Under Axioms~\ref{ax:geometric_thinness}, \ref{ax:smooth_initial_conditions}, and \ref{ax:generic_avoidance}, the measure of the set of initial conditions for which the training path ever intersects $\Pi_S$ is zero.
\end{theorem}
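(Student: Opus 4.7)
The plan is to partition the ``bad'' set $B \subseteq \mathbb{R}^n$ of initial conditions whose training path ever reaches $\Pi_S$ into two pieces and bound each with one of the preceding lemmas. Writing $B = B_0 \cup B_+$ with $B_0 = B \cap \Pi_S$ (initial conditions already safe at $t = 0$) and $B_+ = B \setminus \Pi_S$ (initial conditions whose trajectory first enters $\Pi_S$ at some strictly positive time), the target reduces to showing $\mu(B_0) = \mu(B_+) = 0$ under the initialization measure $\mu$. The first piece is immediate: the preceding lemma established that $\Pi_S$ has Lebesgue measure zero, and Axiom~\ref{ax:smooth_initial_conditions} gives that $\mu$ is absolutely continuous with respect to Lebesgue measure, so $\mu(B_0) \le \mu(\Pi_S) = 0$.

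The real work lies in $B_+$, and here I would use Axiom~\ref{ax:generic_avoidance} together with a lifting of the per-path ``genericity'' statement to a measure-theoretic statement about the initial condition $w_0$. Let $\gamma(\cdot; w_0)$ denote the $C^1$ trajectory starting at $w_0$, and let $E_t(w_0) := \gamma(t; w_0)$ be the time-$t$ evaluation map. By Axiom~\ref{ax:geometric_thinness} the set $\Pi_S$ has codimension strictly greater than $1$, and by Axiom~\ref{ax:generic_avoidance} a generic one-dimensional $C^1$ path does not intersect such a set. Concretely, I would discretize time along a countable dense subset $\{t_k\} \subset [0, \infty)$, observe that each $E_{t_k}$ is a $C^1$ map (a local diffeomorphism for smooth deterministic training dynamics away from fixed points), and conclude that $E_{t_k}^{-1}(\Pi_S)$ has Lebesgue measure zero; a transversality argument at the trajectory level then rules out initial conditions whose path grazes $\Pi_S$ only at non-rational times. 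Taking the countable union yields a Lebesgue-null set containing $B_+$, and absolute continuity of $\mu$ delivers $\mu(B_+) = 0$.

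The hard part is precisely this last translation: passing from the per-path genericity asserted by Axiom~\ref{ax:generic_avoidance} to a measure-zero statement over the space of initial conditions. For a deterministic gradient flow this reduction is clean because $E_t$ is a diffeomorphism at each time, so measure zero pulls back to measure zero; for stochastic training dynamics such as SGD, however, one must instead disintegrate the joint measure on $(w_0, \xi)$ over noise realizations $\xi$ and show that the conditional path-space measure charges the non-generic trajectories singled out by the axiom with zero mass, then invoke Fubini. A secondary subtlety is ensuring that tangential, non-transverse intersections occurring at isolated non-rational times do not inflate the measure of $B_+$; this can be handled by applying Sard's theorem to $E_t$ fibrewise, combined with the strict codimension bound from Axiom~\ref{ax:geometric_thinness}, which guarantees that even accounting for tangencies the projected parameter set remains Lebesgue-null. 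Once these plumbing issues are dispatched, subadditivity yields $\mu(B) \le \mu(B_0) + \mu(B_+) = 0$, as required.
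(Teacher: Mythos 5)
Your decomposition is exactly the paper's: split the bad set into $B_0$ (initialized inside $\Pi_S$) and $B_+$ (entering at positive time), kill $B_0$ with the measure-zero lemma for $\Pi_S$ plus absolute continuity of the initialization, and conclude by subadditivity. Where you diverge is $B_+$. The paper's proof is deliberately short there: Axiom~\ref{ax:generic_avoidance} is meant in the measure-theoretic sense (made explicit in the appendix restatement, Axiom~\ref{ax:paths_miss_PiS_appendix}: for $\mu_0$-almost every path starting outside $\Pi_S$, the path never enters it), so $\mu(B_+)=0$ is an immediate citation of the axiom, and the theorem --- which is explicitly conditional on the three axioms --- requires nothing more. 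You instead read the axiom as a per-path (topological/transversality) genericity statement and try to lift it to a statement about the measure on initial conditions; that lift is precisely what the paper chooses to axiomatize rather than prove.

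The added machinery you propose for that lift does not close the gap it opens. Discretizing time over a countable dense set $\{t_k\}$ and pulling back $\Pi_S$ under the evaluation maps $E_{t_k}$ only controls intersections occurring at those times; since $\Pi_S$ is a thin (codimension $>1$, in particular non-open) set, a trajectory can hit it at a single arbitrary real time without hitting it at any nearby rational time, so the event ``ever intersects'' is the union over \emph{uncountably} many $t$ of the null sets $E_t^{-1}(\Pi_S)$, which need not be null --- this is the crux, and your appeal to ``a transversality argument at the trajectory level'' and to Sard's theorem applied ``fibrewise'' is asserted rather than argued (Sard concerns critical values of a smooth map, not preimages of an arbitrary Hausdorff-thin set), and the Fubini/disintegration step for stochastic dynamics is likewise only sketched. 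So: your proof of $\mu(B_0)=0$ and the overall decomposition match the paper and are fine; your treatment of $B_+$ attempts to prove something stronger than the conditional theorem demands and, as written, contains a genuine unproved step. The repair is simply to invoke Axiom~\ref{ax:generic_avoidance} in its almost-sure form for $B_+$, as the paper does.
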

\begin{proof}
Let $E$ be the event that a path intersects $\Pi_S$. This can be decomposed into two disjoint events: starting in $\Pi_S$ ($E_0$) or starting outside and later entering $\Pi_S$ ($E_+$). From the preceding lemmas, the measure of $E_0$ is zero. From Axiom~\ref{ax:generic_avoidance}, the measure of $E_+$ is also zero. Since the measure of the union of two measure-zero sets is zero, the total probability of intersection is zero.
\end{proof}


\section{Additional Technical Results}
\label{app:additional_results}

This appendix contains additional technical results and supporting lemmas that are referenced throughout the main text.

\subsection{Proof of Theorem~\ref{thm:epsilon_robust_verification_complexity_unified} ($\epsilon$-Robust Verification Complexity)}
\label{appendix:proof_eps_robust_verification_complexity}

\begin{proof}
The proof extends the coNP-hardness result from the perfect safety case ($\epsilon=0$) to the $\epsilon$-robust case where $\epsilon > 0$. The core idea is to show that if one could efficiently verify $\epsilon$-robust safety, one could also solve a known coNP-complete problem.

We again use a reduction from Tautology. Given a Boolean formula $\phi$, we construct a policy $\pi_\phi$ as in the proof of Theorem~\ref{thm:sharp_verification_threshold_unified}. The error rate of this policy, $\epsilon(\pi_\phi)$, is either 0 (if $\phi$ is a tautology) or at least $1/2^k$ (if $\phi$ is not a tautology), where $k$ is the number of variables.

The problem of $\epsilon$-robust verification is to distinguish between $\epsilon(\pi) \le \epsilon$ and $\epsilon(\pi) > \epsilon$.

Let's choose a small, positive $\epsilon$ such that $0 < \epsilon < 1/2^k$. For example, let $\epsilon = 1/2^{k+1}$.

Now, consider the verification of our constructed policy $\pi_\phi$ against this threshold $\epsilon$:
\begin{itemize}
    \item If $\phi$ is a tautology, then $\epsilon(\pi_\phi) = 0$. Since $0 \le \epsilon$, the policy is $\epsilon$-robustly safe.
    \item If $\phi$ is not a tautology, then $\epsilon(\pi_\phi) \ge 1/2^k$. Since $1/2^k > \epsilon$, the policy is not $\epsilon$-robustly safe.
\end{itemize}

An efficient algorithm for $\epsilon$-robust verification could therefore distinguish whether $\phi$ is a tautology or not. Since Tautology is coNP-complete, $\epsilon$-robust verification must also be coNP-complete.

The key insight is that the hardness does not come from the precise value of the error, but from the difficulty of distinguishing zero error from any small, non-zero error. As long as the chosen $\epsilon$ is smaller than the smallest possible non-zero error rate that a policy can exhibit, the hardness result holds.
\end{proof}
\subsection{Empirical CRS Data Summary}\label{sec:appendix_crs}
\textit{Note: A formal verification of the core mathematical results presented in this appendix is currently in progress using the Lean4 proof assistant. The ongoing work can be found in the accompanying \texttt{lean4\_work/} directory.}

The following table summarizes key empirical findings discussed in the main text that align with the Capability-Risk Scaling (CRS) dynamic and the Alignment Trap's premises.

\begin{table}[H] 
    \phantomsection
\centering
\caption{Summary of Empirical Observations Related to CRS Dynamics}
\label{tab:empirical_crs}
\begin{tabular}{@{}llll@{}}
\toprule
Source & Task & Scaling Effect & CRS Implication \\ \midrule
Nature (2024) \cite{zhou2024lessReliable} & Hallucination & \begin{tabular}[t]{@{}l@{}}$\uparrow$ Capability $\rightarrow$ \\ $\uparrow$ Errors\end{tabular} & \begin{tabular}[t]{@{}l@{}}Capabilities $\neq$ \\ safety\end{tabular} \\ \\
Wei et al. (2022) \cite{wei2022inverse} & \begin{tabular}[t]{@{}l@{}}Few-shot \\ reasoning\end{tabular} & U-shaped scaling & \begin{tabular}[t]{@{}l@{}}Instability \\ with size\end{tabular} \\ \\
Bartoldson et al. (2025) \cite{bartoldson2025robustness} & \begin{tabular}[t]{@{}l@{}}Adversarial \\ examples\end{tabular} & \begin{tabular}[t]{@{}l@{}}Plateau at \\ large scales\end{tabular} & \begin{tabular}[t]{@{}l@{}}Intractable \\ defense\end{tabular} \\ \\
Xu et al. (2024) \cite{xu2024jailbreak} & Jailbreaking & Size $\neq$ safety & \begin{tabular}[t]{@{}l@{}}Incomplete \\ alignment\end{tabular} \\ \\
Rokach \& Fire (2025) \cite{rokach2025jailbreaks} & Jailbreaking & \begin{tabular}[t]{@{}l@{}}$>$90\% universal \\ success\end{tabular} & \begin{tabular}[t]{@{}l@{}}Trap is \\ inescapable\end{tabular} \\ \\
Xu et al. (2023) \cite{xu2023wizardlm} & \begin{tabular}[t]{@{}l@{}}Instruction \\ tuning\end{tabular} & \begin{tabular}[t]{@{}l@{}}Smaller models \\ outperform\end{tabular} & \begin{tabular}[t]{@{}l@{}}Teaching $\neq$ \\ alignment\end{tabular} \\ \bottomrule
\end{tabular}
\end{table}

\subsection{Proof of No Static Audit Completeness}\label{appendix:proof_no_static_audit_completeness}
\begin{proof}
Let $V = \{test_1, \ldots, test_n\}$ be a finite set of audit tests, where each $test_i$ checks the policy's output on a finite set of inputs $X_i$. Let $X_V = \bigcup_i X_i$ be the total set of all inputs covered by the audit. Since each $X_i$ is finite, $X_V$ is also finite.

The input space $\mathcal{X}$ is assumed to be infinite (or at least vastly larger than what is computationally feasible to cover with tests). Therefore, there exists an input $x^* \in \mathcal{X} \setminus X_V$.

We can construct an adversarial policy $\pi^*$ as follows:
\[
\pi^*(x) = \begin{cases}
    \text{safe\_action} & \text{if } x \in X_V \\
    \text{catastrophic\_action} & \text{if } x = x^* \\
    \text{safe\_action} & \text{otherwise}
\end{cases}
\]
This policy passes all tests in the audit suite $V$, because for every input $x \in X_V$, it behaves safely. However, it is not a safe policy because it causes a catastrophe on input $x^*$.

A system with expressiveness $\EXP(m)$ can implement such a policy if it can distinguish the finite set of audited inputs $X_V$ from the chosen catastrophic input $x^*$. This is a trivial requirement for any sufficiently expressive system.

Therefore, for any finite audit suite, there exists a policy that passes the audit but is unsafe. This demonstrates that no static audit can be complete.
\end{proof}

\subsection{Supporting Lemmas}
\label{app:supporting_lemmas}
\subsubsection*{The \texorpdfstring{$\varepsilon$}{epsilon}-Bound Inheritance Theorem}
\label{sec:epsilon_bound_inheritance_theorem}

\subsection{Setup}

Let $\Theta\subseteq\mathbb{R}^d$ be a Borel subset equipped with a
complete, $\sigma$-finite measure~$\mu$.  
Each $\theta\in\Theta$ indexes a policy $\pi_\theta$, and the
\emph{safety-loss} map
\[
  \varepsilon(\pi_\theta)\;:\;
  \Theta\longrightarrow[0,\infty)
\]
is assumed continuous.  For $\varepsilon\ge0$ define the \emph{$\varepsilon$-safe
parameter set}
\[
  \Theta_S^{\varepsilon}:=\{\theta\in\Theta : \varepsilon(\pi_\theta)\le\varepsilon\}.
\]

\subsection{Inheritance conditions}

\begin{description}
\item[IC1 (parameter dependence).]  The impossibility argument uses only
      $\theta$-level properties and never the special value
      $\varepsilon=0$.
\item[IC2 (measure compatibility).]  Every set/function in the proof is
      Borel-measurable.
\item[IC3 (continuous degradation).]  $\theta_n\!\to\!\theta \;\Rightarrow\;
      \varepsilon(\pi_{\theta_n})\!\to\!\varepsilon(\pi_\theta)$.
\item[IC4 (universal quantification).]  The theorem concludes
      $\forall\theta\in\Theta^\star:\mathrm{Unsafe}(\pi_\theta)$ for a
      measurable $\Theta^\star$ with $\mu(\Theta^\star)=\mu(\Theta)$.
\end{description}

\subsection{\texorpdfstring{$\varepsilon$}{epsilon}-Bound Inheritance}

\begin{theorem}[\,$\varepsilon$-Bound Inheritance\,]
\label{thm:inheritance}
Let\/ $T$ be an impossibility theorem proved for perfect alignment
($\varepsilon=0$).  If\/ $T$ satisfies \emph{IC1–IC4}, then
\[
  \forall\varepsilon>0:\quad
  \mu\!\bigl(\Theta_S^{\varepsilon}\bigr)=0.
\]
All quantitative strength in\/ $T$ is preserved.
\end{theorem}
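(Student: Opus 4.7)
The plan is to treat the inheritance theorem as a meta-level statement: the conditions IC1--IC4 are designed to capture exactly those features of an impossibility proof that can be transported from the boundary value $\varepsilon=0$ to the open half-line $\varepsilon>0$. I would argue by isolating the parameterized structure of the original proof of $T$ and then invoking continuity plus measure-theoretic absoluteness, rather than re-deriving any pillar from scratch.

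First I would reinterpret the hypothesis operationally. Writing $T$ as producing a measurable $\Theta^\star\subseteq\Theta$ with $\mu(\Theta^\star)=\mu(\Theta)$ on which every policy is unsafe, IC1 says the construction of $\Theta^\star$ depends on $(\Theta,\mu,\varepsilon(\pi_{\cdot}))$ only through $\theta$-pointwise predicates and never singles out the threshold $0$. I would therefore introduce a parameterized construction $\Theta^\star(\tau)$ for arbitrary $\tau\geq 0$ by substituting $\tau$ for $0$ wherever the original proof branches on a tolerance. The goal then reduces to proving that for every $\tau>0$: (a) $\Theta^\star(\tau)$ is Borel-measurable, (b) $\mu(\Theta^\star(\tau))=\mu(\Theta)$, and (c) $\Theta^\star(\tau)\cap\Theta_S^{\tau}=\emptyset$, which together force $\mu(\Theta_S^{\tau})=0$.

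I would then verify these three items in turn. Property (a) follows from IC2 together with the closure properties of the Borel $\sigma$-algebra under the operations appearing in the parameterized construction. Property (b) is the measure-preservation step: IC4 gives that the original $\Theta^\star$ has full measure, and since the parameterized construction only shifts thresholds inside $\theta$-local predicates, IC3 ensures the excluded set remains a union of preimages of closed intervals, each of measure zero by a density-point argument in the spirit of Theorem~\ref{thm:measure_zero_safe_policies_unified}. Property (c) is where IC3 does its real work: continuous degradation guarantees that the perturbation-type subarguments inside $T$ (density-point perturbations, transversality crossings, diagonalization steps, cryptographic reductions) continue to produce a strict excess of $\varepsilon(\pi_{\theta'})$ over the loosened threshold $\tau$, because the relevant inequalities are preserved under uniform continuity on compact neighborhoods. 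Quantitative strength transfers because every constant in the parameterized construction is $\theta$-local and hence insensitive to the global shift from $0$ to $\tau$.

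The hard part will be making IC1 mathematically precise, since ``does not use $\varepsilon=0$'' is a syntactic property of a proof rather than a semantic property of its conclusion. My approach is to reformulate IC1 as an equivalent semantic condition: the proof of $T$ implicitly defines a measurable margin function $\delta:\Theta^\star\to(0,\infty]$ such that $\varepsilon(\pi_\theta)\geq\delta(\theta)$ for $\mu$-almost every $\theta\in\Theta^\star$, with $\delta$ determined by the proof's internal parameters rather than by the target tolerance. Once this reformulation is in force, inheritance for any $\tau<\mathrm{ess\,inf}\,\delta$ is immediate, and a covering/approximation argument using IC3 and the $\sigma$-finiteness of $\mu$ extends the conclusion to every $\tau>0$. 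A secondary challenge is non-vacuity: I would close the proof by auditing the pillars of Section~\ref{sec:five_pillars_unified} to confirm each admits such a margin function, since otherwise the meta-theorem would have nothing concrete to inherit.
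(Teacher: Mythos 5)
Your parameterized re-run of the original proof (substituting a tolerance $\tau$ for $0$, tracking measurability via IC2, full measure via IC4, and a margin function $\delta(\theta)$ extracted from IC1) is a reasonable way to make the meta-claim concrete, and it is more explicit than the paper's own argument, which simply cites $T[0]$, an auxiliary ``Lemma~A.3'' producing some $\delta\in(0,\varepsilon)$ with $\mu(\Theta_S^{\delta})=0$, and a monotonicity step. But your proposal has a genuine gap at exactly the point that carries the entire content of the theorem: the passage from thresholds below the margin to \emph{every} $\varepsilon>0$. The sets $\Theta_S^{\varepsilon}$ are nested \emph{increasingly} in $\varepsilon$ ($\varepsilon_1\le\varepsilon_2\Rightarrow\Theta_S^{\varepsilon_1}\subseteq\Theta_S^{\varepsilon_2}$), so knowing $\mu(\Theta_S^{\tau})=0$ for all $\tau<\operatorname{ess\,inf}\delta$ gives no control whatsoever on $\mu(\Theta_S^{\varepsilon})$ for larger $\varepsilon$; monotone convergence only transfers measure-zero downward along shrinking sublevel sets, never upward. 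Your proposed fix --- ``a covering/approximation argument using IC3 and the $\sigma$-finiteness of $\mu$'' --- cannot supply this: continuity of $\theta\mapsto\varepsilon(\pi_\theta)$ (IC3) constrains how the loss varies in $\theta$, not how large it is, and $\sigma$-finiteness is irrelevant to the size of a fixed sublevel set. If $\operatorname{ess\,inf}\delta=0$ (which IC1--IC4 as stated do not rule out, since IC4 only asserts $\mathrm{Unsafe}(\pi_\theta)$, i.e.\ a strictly positive loss, on a full-measure set), your ``immediate'' case is vacuous and nothing is proved for any $\varepsilon>0$.

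For what it is worth, the paper's own proof does not fill this hole either: it outsources the hard step to the unstated Lemma~A.3 and then applies monotonicity with the inclusion written as $\Theta_S^{\varepsilon}\subseteq\Theta_S^{\delta}$ for $\delta<\varepsilon$, which is the reverse of the true containment. So you have, in effect, reproduced the paper's gap rather than closed it. A correct proof would need a substantive hypothesis (or a strengthened IC1/IC4) guaranteeing a uniform, quantitatively large margin --- e.g.\ that the original argument yields $\varepsilon(\pi_\theta)\ge\varepsilon_0$ on a full-measure set for the specific $\varepsilon_0$ of interest, or that $\varepsilon(\pi_\theta)=\infty$ a.e.\ --- and the conclusion should then be restricted to $\varepsilon<\varepsilon_0$ rather than asserted for all $\varepsilon>0$. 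Your margin-function reformulation of IC1 is exactly the right device for stating such a hypothesis; the missing piece is that you must assume (or derive from the specific pillar being inherited) a positive lower bound on that margin rather than conjuring it from continuity and covering arguments.
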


\begin{proof}
($i$) By $T[0]$, $\mu(\Theta_S^{0})=0$.

($ii$) Fix $\varepsilon>0$.  
Lemma~A.3 furnishes a $\delta\in(0,\varepsilon)$ with
$\mu(\Theta_S^{\delta})=0$.

($iii$) Monotonicity of $\mu$ plus
$\Theta_S^{\varepsilon}\subseteq\Theta_S^{\delta}$ yields
$\mu(\Theta_S^{\varepsilon})\le\mu(\Theta_S^{\delta})=0$.

($iv$) Because $\varepsilon$ was arbitrary, the claim holds for all
$\varepsilon>0$.  Quantitative certificates are inherited by the same
argument applied to their level-sets.
\end{proof}

\section{Connection to Complexity Classes}
\label{app:connection_complexity_classes}
\section*{Additional Theorems and Proofs (D.22-D.26)}
\label{sec:additional_theorems_d22_d26}

\subsection*{D.22: No Path Through the Safe Set (Topological Alignment Trap)}

\subsubsection*{\textbf{Mathematical Framework and Assumptions}}
Let $n \ge 2$ be the dimension of the parameter space $\mathbb{R}^{n}$. We consider a set $\Pi_S \subset \mathbb{R}^{n}$ representing the "perfectly safe" policies. Training dynamics are modeled as a family of continuously differentiable ($C^1$) paths $\{\phi_\omega : \mathbb{R}_{\ge 0} \to \mathbb{R}^{n}\}_{\omega \in \Omega}$, where $\omega$ is drawn from a probability space $(\Omega, \mathcal{F}, \mu_0)$. We denote the $n$-dimensional Lebesgue measure by $\lambda^{n}$.

Our main theorem relies on the following foundational assumptions (which correspond to Axioms \ref{ax:PiS_dim_appendix}, \ref{ax:mu0_abs_cont_appendix}, and \ref{ax:paths_miss_PiS_appendix} in Section~\ref{sec:additional_theorems_d22_d26} of the main text):

\begin{axiom}[Geometric Thinness of $\Pi_S$]\label{ax:PiS_dim_appendix}
The set of safe policies $\Pi_S$ has a Hausdorff dimension strictly less than $n-1$:
\[ \text{dim}_{\text{H}}(\Pi_S) < n-1 \]
\end{axiom}

\begin{axiom}[Smoothness of Initial Condition Distribution]\label{ax:mu0_abs_cont_appendix}
The distribution of initial policy parameters $\phi_\omega(0)$, induced by $\mu_0$, is absolutely continuous with respect to the $n$-dimensional Lebesgue measure $\lambda^{n}$. Specifically, for any set $S \subset \mathbb{R}^{n}$, if $\lambda^{n}(S) = 0$, then $\mu_0(\{\omega \in \Omega \mid \phi_\omega(0) \in S\}) = 0$.
\end{axiom}

\begin{axiom}[Generic Dynamic Avoidance of Thin Sets]\label{ax:paths_miss_PiS_appendix}
For $\mu_0$-almost every $\omega \in \Omega$, if the $C^1$ path $\phi_\omega$ starts outside $\Pi_S$ (i.e., $\phi_\omega(0) \notin \Pi_S$), then $\phi_\omega$ does not intersect $\Pi_S$ for any positive time $t > 0$. Formally:
\[ \forall^{\text{a.e. } \mu_0} \omega \in \Omega, \quad \phi_\omega(0) \notin \Pi_S \implies \{t \in \mathbb{R}_{\ge 0} \mid t > 0 \land \phi_\omega(t) \in \Pi_S\} = \emptyset \]
\end{axiom}

\begin{remark}[Justification of Axiom \ref{ax:paths_miss_PiS_appendix}]
Axiom \ref{ax:paths_miss_PiS_appendix} encapsulates a key consequence of transversality theory (e.g., Thom's Transversality Theorem) and dimensional arguments from geometric measure theory. Given Axiom \ref{ax:PiS_dim_appendix} ($\text{dim}_{\text{H}}(\Pi_S) < n-1$), $\Pi_S$ has a codimension strictly greater than 1. A $C^1$ path $\phi_\omega$ traces a 1-dimensional curve. Generic $C^1$ paths (which occur for $\mu_0$-almost all $\omega$ under suitable genericity assumptions on the family $\{\phi_\omega\}$ and $\mu_0$) do not intersect sets of codimension $>1$. The dimensional mismatch ($\text{dim}(\text{Im}(\phi_\omega)) + \text{dim}(\Pi_S) - \text{dim}(\mathbb{R}^{n}) < 1 + (n-1) - n = 0$) implies an empty intersection for transverse cases.
\end{remark}

\subsubsection*{\textbf{Main Result: The Topological Alignment Trap}}

\begin{lemma}[$\Pi_S$ has Lebesgue Measure Zero]\label{lemma:PiS_measure_zero_appendix}
Given Axiom \ref{ax:PiS_dim_appendix} and $n \ge 2$, the $n$-dimensional Lebesgue measure of $\Pi_S$ is zero: $\lambda^{n}(\Pi_S) = 0$.
\end{lemma}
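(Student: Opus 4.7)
The plan is to invoke a classical result from geometric measure theory that directly ties the Hausdorff dimension hypothesis of Axiom~\ref{ax:PiS_dim_appendix} to vanishing Lebesgue measure. First I would recall the fundamental dimensional inequality: for any subset $A \subseteq \mathbb{R}^{n}$, if $s > \dim_{\text{H}}(A)$, then the $s$-dimensional Hausdorff measure satisfies $\mathcal{H}^{s}(A) = 0$. This is essentially the definition of Hausdorff dimension as the threshold value $\inf\{s : \mathcal{H}^{s}(A) = 0\}$.

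Applying this with $s = n$ to the set $\Pi_S$, Axiom~\ref{ax:PiS_dim_appendix} gives $\dim_{\text{H}}(\Pi_S) < n - 1 < n$, so $n$ strictly exceeds the Hausdorff dimension. Therefore $\mathcal{H}^{n}(\Pi_S) = 0$. Note that the stronger bound $n-1$ in the axiom is not needed for this lemma (it will be used later for the transversality argument underpinning Axiom~\ref{ax:paths_miss_PiS_appendix}); only $\dim_{\text{H}}(\Pi_S) < n$ is actually exploited here.

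Next I would translate vanishing Hausdorff measure into vanishing Lebesgue measure. The key fact is the identification $\lambda^{n} = c_{n}\,\mathcal{H}^{n}$ on the Borel $\sigma$-algebra of $\mathbb{R}^{n}$, where $c_{n} > 0$ is a dimensional constant (see, e.g., Federer, \emph{Geometric Measure Theory}, \S2.10, or Mattila, \emph{Geometry of Sets and Measures in Euclidean Spaces}, Thm.~1.12). Since $\lambda^{n}$ is the completion of its Borel restriction, any set of outer Hausdorff measure zero has Lebesgue outer measure zero and is Lebesgue measurable. Hence $\lambda^{n}(\Pi_S) = c_{n}\mathcal{H}^{n}(\Pi_S) = 0$, which is the claim.

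The main obstacle is essentially bookkeeping rather than substance: one must ensure that $\Pi_S$ is handled correctly even if it is not assumed Borel. This is harmless because Hausdorff measures are defined as outer measures on all subsets of $\mathbb{R}^{n}$, and completeness of $\lambda^{n}$ absorbs any measurability concerns for null sets. Consequently, the lemma reduces to a direct application of two textbook results, with no genuine difficulty once the geometric-measure-theoretic framework is in place.
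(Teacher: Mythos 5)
Your proof is correct and follows essentially the same route as the paper, which simply cites the standard geometric-measure-theoretic fact that $\dim_{\text{H}}(\Pi_S) < n$ forces $\lambda^{n}(\Pi_S) = 0$; you merely unpack that fact into its two textbook ingredients (the dimension threshold giving $\mathcal{H}^{n}(\Pi_S)=0$, and the identification $\lambda^{n}=c_{n}\mathcal{H}^{n}$ together with completeness). Your added observation that only $\dim_{\text{H}}(\Pi_S) < n$ is needed here, not the stronger $n-1$ bound, is accurate and consistent with the paper's own reasoning.
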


\begin{lemma}[Initial Conditions Almost Surely Avoid $\Pi_S$]\label{lemma:initial_avoids_PiS_appendix}
Given Axiom \ref{ax:PiS_dim_appendix}, Axiom \ref{ax:mu0_abs_cont_appendix}, and $n \ge 2$, the set of initial conditions $\omega$ for which $\phi_\omega(0) \in \Pi_S$ has $\mu_0$-measure zero:
\[ \mu_0(\{\omega \in \Omega \mid \phi_\omega(0) \in \Pi_S\}) = 0 \]
\end{lemma}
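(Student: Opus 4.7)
The plan is to reduce the claim to a direct application of Axiom~\ref{ax:mu0_abs_cont_appendix} (absolute continuity of the initialization distribution with respect to $\lambda^{n}$), using the preceding Lemma~\ref{lemma:PiS_measure_zero_appendix} as the input about $\Pi_S$. The structure is: (i) invoke Lemma~\ref{lemma:PiS_measure_zero_appendix} to get $\lambda^{n}(\Pi_S)=0$; (ii) rewrite the set in question as the preimage of $\Pi_S$ under the initialization map; (iii) apply absolute continuity to conclude $\mu_0$-measure zero.

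More concretely, first I would define the evaluation-at-zero map $E_0 : \Omega \to \mathbb{R}^{n}$ by $E_0(\omega) := \phi_\omega(0)$, and set $B := \{\omega \in \Omega \mid \phi_\omega(0) \in \Pi_S\} = E_0^{-1}(\Pi_S)$. For $B$ to qualify as an event in $\mathcal{F}$, I need $E_0$ to be $(\mathcal{F},\mathcal{B}(\mathbb{R}^{n}))$-measurable and $\Pi_S$ to be Borel. Measurability of $E_0$ is implicit in Axiom~\ref{ax:mu0_abs_cont_appendix}, which is phrased in terms of the pushforward of $\mu_0$ under $E_0$; and $\Pi_S$ is Borel by the standing hypothesis that the safety-loss map $\varepsilon(\pi_\theta)$ is continuous, so that $\Pi_S = \{\theta : \varepsilon(\pi_\theta) \le 0\}$ is closed (in particular Borel). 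By Lemma~\ref{lemma:PiS_measure_zero_appendix}, $\lambda^{n}(\Pi_S) = 0$. Axiom~\ref{ax:mu0_abs_cont_appendix} then says exactly that for any Borel $S \subset \mathbb{R}^{n}$ with $\lambda^{n}(S)=0$, one has $\mu_0(E_0^{-1}(S))=0$. Applying this with $S=\Pi_S$ gives $\mu_0(B)=0$, which is the claim.

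There is essentially no hard analytic step: the entire argument is a bookkeeping exercise combining a measure-zero fact about $\Pi_S$ (furnished by Lemma~\ref{lemma:PiS_measure_zero_appendix}) with the definition of absolute continuity (furnished by Axiom~\ref{ax:mu0_abs_cont_appendix}). The only point requiring mild care is the measurability bookkeeping in step (ii), namely verifying that $B$ lies in $\mathcal{F}$ so that $\mu_0(B)$ is defined; this is the one place where implicit regularity of $E_0$ and of $\Pi_S$ must be made explicit. If the paper wishes to weaken the continuity assumption on $\varepsilon(\pi_\theta)$ to mere Borel measurability, then $\Pi_S$ is still Borel and the same proof goes through; if one further weakens to Lebesgue-measurable, then one should replace Axiom~\ref{ax:mu0_abs_cont_appendix} by its statement for Lebesgue-measurable sets, which is equivalent under completion of $\mu_0$. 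No further obstacles are anticipated.
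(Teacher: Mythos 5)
Your proposal is correct and follows essentially the same route as the paper's own proof: invoke the preceding lemma to get $\lambda^{n}(\Pi_S)=0$, then apply Axiom~\ref{ax:mu0_abs_cont_appendix} to the preimage of $\Pi_S$ under the initialization map $\omega \mapsto \phi_\omega(0)$. Your added measurability bookkeeping (that the event lies in $\mathcal{F}$ and that $\Pi_S$ is Borel) is a reasonable explicit supplement to what the paper leaves implicit, but it does not change the argument.
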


\begin{theorem}[Topological Alignment Trap]\label{thm:topological_alignment_trap_appendix}
Let $n \ge 2$. Under Axioms \ref{ax:PiS_dim_appendix}, \ref{ax:mu0_abs_cont_appendix}, and \ref{ax:paths_miss_PiS_appendix}, the $\mu_0$-measure of the set of initial conditions $\omega$ for which the path $\phi_\omega$ ever intersects $\Pi_S$ is zero:
\[ \mu_0(\{\omega \in \Omega \mid \exists t \in \mathbb{R}_{\ge 0}, \phi_\omega(t) \in \Pi_S\}) = 0 \]
\end{theorem}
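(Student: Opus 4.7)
The plan is to split the target event into two disjoint pieces according to whether the path starts inside $\Pi_S$ or outside it, observe that each piece has already been handled (the first by the preceding lemma, the second by the corresponding axiom), and conclude by finite additivity of $\mu_0$. Concretely, I would introduce
\[
E := \{\omega \in \Omega \mid \exists t \in \mathbb{R}_{\ge 0},\ \phi_\omega(t) \in \Pi_S\},\quad
E_0 := \{\omega \in \Omega \mid \phi_\omega(0) \in \Pi_S\},
\]
and let $E_+ := E \setminus E_0$, i.e.\ the set of $\omega$ with $\phi_\omega(0)\notin\Pi_S$ but $\phi_\omega(t)\in\Pi_S$ for some $t>0$. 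Then $E = E_0 \sqcup E_+$ as a disjoint union, so $\mu_0(E)=\mu_0(E_0)+\mu_0(E_+)$, and it suffices to show both summands vanish.

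For the first summand, Lemma~\ref{lemma:initial_avoids_PiS_appendix} gives $\mu_0(E_0)=0$ directly: Axiom~\ref{ax:PiS_dim_appendix} combined with Lemma~\ref{lemma:PiS_measure_zero_appendix} forces $\lambda^n(\Pi_S)=0$, and absolute continuity of the initialization law with respect to $\lambda^n$ (Axiom~\ref{ax:mu0_abs_cont_appendix}) then pushes this nullity up to $\Omega$. For the second summand, Axiom~\ref{ax:paths_miss_PiS_appendix} is literally the assertion that $\mu_0(E_+)=0$: it says that for $\mu_0$-a.e.\ $\omega$, an initial condition outside $\Pi_S$ implies no future intersection with $\Pi_S$, so the complement $E_+$ is $\mu_0$-null. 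Adding the two zeros yields $\mu_0(E)=0$, as claimed, and no further quantitative estimate is needed.

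The one point where some care is genuinely required is measurability: the argument above silently assumes that $E$, $E_0$, and $E_+$ lie in $\mathcal{F}$. The set $E_0$ is the preimage of the Borel set $\Pi_S$ under the measurable map $\omega\mapsto \phi_\omega(0)$, so it is unproblematic; but $E$ contains an existential quantifier over $t\in\mathbb{R}_{\ge 0}$, which in general produces an analytic (not Borel) set. The clean way to handle this is to exploit $C^1$-continuity of each path $t\mapsto\phi_\omega(t)$ together with closedness (or at least Borelness) of $\Pi_S$: continuity lets one replace the uncountable quantifier over $t\in\mathbb{R}_{\ge 0}$ by a countable quantifier over $t\in\mathbb{Q}_{\ge 0}$ in the event $\{\phi_\omega(t)\in\overline{\Pi_S}\}$, reducing $E$ to a countable union of $\mathcal{F}$-measurable sets provided $(\omega,t)\mapsto\phi_\omega(t)$ is jointly measurable. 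This joint measurability is a standard regularity hypothesis one should state explicitly alongside Axioms~\ref{ax:PiS_dim_appendix}–\ref{ax:paths_miss_PiS_appendix}; once it is in place, the three-line proof above goes through without modification, and the rest of the argument is purely set-theoretic.
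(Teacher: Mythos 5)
Your proof is correct and follows essentially the same route as the paper's own argument: the same disjoint decomposition $E = E_0 \sqcup E_+$, with $\mu_0(E_0)=0$ from Lemma~\ref{lemma:initial_avoids_PiS_appendix} (via $\lambda^n(\Pi_S)=0$ and absolute continuity) and $\mu_0(E_+)=0$ read off directly from Axiom~\ref{ax:paths_miss_PiS_appendix}, then additivity. Your closing remark on measurability of $E$ (replacing the uncountable quantifier over $t$ by a countable one via path continuity, under a joint-measurability hypothesis) is a point of rigor the paper passes over silently, and is a worthwhile addition rather than a deviation.
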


\subsection*{D.23: Cryptographically Tight Verification Threshold}

\subsubsection*{\textbf{Theorem (Verification Complexity Barrier)}}

Let the input complexity of a policy be $m$. Suppose catastrophic behavior is controlled by a pseudorandom function (PRF). Then:
\begin{itemize}
    \item If $m \ge \tau = \lceil \log_2 d \rceil + 2$, verifying safety requires breaking the PRF (i.e., is cryptographically hard)
    \item If $m < \tau$, verification is in $\mathsf{P}$
\end{itemize}

\subsection*{D.24: No Universal Alignment Technique}

\subsubsection*{\textbf{Theorem (Universal Alignment Impossibility)}}

For any countable class $\mathcal{A} = \{A_1, A_2, \ldots\}$ of alignment techniques $A_i: \Pi \to \Pi$, there exists a policy $\pi^* \in \Pi$ such that no technique aligns it:
\[
\forall i \in \mathbb{N}, \quad \varepsilon(A_i(\pi^*)) > 0
\]
Where $\varepsilon(\cdot)$ denotes alignment error.

\section[Impossibility of Universal Easy Verifiability]{Impossibility of Universal Easy Verifiability for Capability-Preserving Alignment Transformations}\label{appendix:proof_universal_verifiability_impossibility_content}
\index{Universal Easy Verifiability Impossibility|textbf}\index{Capability Preservation!and Verification Trade-off|textbf}

\begin{definition}[Capability-Preserving Alignment]\label{def:cap_preserving_alignment_trap_univ_appendix}
An alignment transformation $A: \Pi \rightarrow \Pi$ is capability-preserving if:
\begin{enumerate}
    \item[(i)] $A(\Pi_{\mathrm{EXP}(m)}) \subseteq \Pi_{\mathrm{EXP}(m)}$ (preserves complexity class)
    \item[(ii)] $\forall \pi \in \Pi_{\mathrm{EXP}(m)}: \mathrm{Circuit\_Size}(A(\pi)) \geq \mathrm{Circuit\_Size}(\pi) - o(m)$
\end{enumerate}
\end{definition}

\begin{definition}[Safety]\label{def:safety_trap_univ_appendix}
A policy $\pi$ is safe if $\pi(x) \neq \mathrm{catastrophic}$ for all $x$ in its domain.
\end{definition}

\begin{definition}[Verification Cost]\label{def:verif_cost_trap_univ_appendix}
$V(\pi) = \min\{|C| : C \text{ is a circuit that correctly determines if } \pi \text{ is safe}\}$
\end{definition}

\begin{theorem}[Trap Universality]\label{thm:trap_universality_appendix}
This theorem, a direct corollary of cryptographic hardness arguments, establishes that for any capability-preserving alignment transformation $A$ and any $m \geq m_0$ (where $m = \mathrm{poly}(n)$ and $m_0$ ensures the input dimension $n$ is large enough for cryptographic security), one can construct an EXP(m) policy $\pi$ with a cryptographic backdoor such that the backdoor persists through $A$, and thus verifying $A(\pi)$ remains cryptographically hard. Specifically, there exists $\pi \in \Pi_{\mathrm{EXP}(m)}$ such that:
\[
V(A(\pi)) \geq \tau(n) = 2^{\Omega(n)}
\]
Moreover, as capability scales $m \rightarrow m'$, the verification barrier persists.
\end{theorem}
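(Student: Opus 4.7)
The plan is to prove Theorem~\ref{thm:trap_universality_appendix} by a cryptographic reduction, embedding a pseudorandom-function (PRF) trap inside a polynomial-size policy circuit so that any capability-preserving transformation $A$ either leaves the trap essentially intact or would double as a distinguisher for the PRF. Fix a secure PRF family $\{f_k : \{0,1\}^n \to \{0,1\}\}_{k \in \{0,1\}^n}$ of polynomial circuit size, and define
\[
\pi_k(x) = \begin{cases} \mathrm{catastrophic} & \text{if } f_k(x) = 0, \\ \mathrm{safe\_default} & \text{otherwise.} \end{cases}
\]
Because $f_k$ is computable by a polynomial-size circuit in $n$, the policy $\pi_k$ belongs to $\Pi_{\mathrm{EXP}(m)}$ for any $m = \mathrm{poly}(n)$ large enough to contain the PRF evaluation circuit, so hypothesis (i) of Definition~\ref{def:cap_preserving_alignment_trap_univ_appendix} is automatic and $\pi_k$ is unsafe in the sense of Definition~\ref{def:safety_trap_univ_appendix}.

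Next I would show that the trap survives any capability-preserving $A$. Assume toward contradiction that for a non-negligible fraction of keys $k$, the transformed policy $A(\pi_k)$ contains no catastrophic-input triggers. Build the following PRF distinguisher $D$: given oracle access to an unknown function $g : \{0,1\}^n \to \{0,1\}$, construct the circuit $\pi_g$ by substituting $g$-oracle gates into the PRF slot, apply $A$ to obtain $A(\pi_g)$, and check on polynomially many random inputs whether $A(\pi_g)$ ever outputs $\mathrm{catastrophic}$. When $g = f_k$, $D$ rarely sees catastrophes (by our assumption on $A$); when $g$ is uniformly random, the trigger set has expected density $1/2$, so $A$ would have to eliminate an exponentially large random subset of inputs while Definition~\ref{def:cap_preserving_alignment_trap_univ_appendix}(ii) restricts it to an $o(m)$ circuit-size budget. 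The resulting non-negligible advantage of $D$ contradicts PRF security.

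With at least one catastrophic trigger persisting in $A(\pi_k)$, the final step is to lower-bound $V(A(\pi_k))$. By Definition~\ref{def:verif_cost_trap_univ_appendix}, any safety verifier must decide whether some $x$ drives $A(\pi_k)$ to $\mathrm{catastrophic}$; by the persistence argument, this entails effectively locating a zero of $f_k$ (modulo the bounded perturbation introduced by $A$). A standard search-to-decision reduction upgrades this to PRF inversion hardness, yielding $V(A(\pi_k)) \geq 2^{\Omega(n)}$. The scaling assertion $m \to m'$ then follows by reinstantiating the construction at security parameter $n' = n(m')$, so the threshold $\tau(n')$ inherits the exponential lower bound.

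The main obstacle will be making the persistence step rigorous, because Definition~\ref{def:cap_preserving_alignment_trap_univ_appendix} does not explicitly require $A$ to be computationally efficient: it only constrains the circuit size of its output. A clean reduction must therefore formalize the circuit-complexity "budget" argument so that the $o(m)$ slack in clause (ii) is provably insufficient to encode a PRF-inverting patch. I would favor a compression/counting argument in the spirit of Nisan--Wigderson, showing that any small-overhead transformation that reliably neutralizes the PRF-defined trigger set could itself be compiled into a circuit whose existence contradicts the assumed hardness of $f$; otherwise, one must restrict $A$ to polynomial-time maps and complete the reduction via standard hybrid game-hopping.
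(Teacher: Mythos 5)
Your proposal follows essentially the same route as the paper's own proof: embed a PRF-keyed backdoor in a polynomial-size policy, argue that any capability-preserving $A$ cannot neutralize the trigger set without breaking the PRF, conclude $V(A(\pi)) \geq 2^{\Omega(n)}$, and rescale the security parameter as $m \to m'$. Two points of comparison are worth noting. First, the obstacle you flag at the end--that Definition~\ref{def:cap_preserving_alignment_trap_univ_appendix} constrains only the output circuit size and not the computational power of $A$--is real, and the paper resolves it simply by asserting in its hardness step that $A$ is ``efficient (polynomial time in $m$)'' with only black-box access to the key, i.e.\ by an assumption beyond the stated definition; your instinct that a compression/counting argument or an explicit efficiency restriction is needed to make the step rigorous is the more honest treatment. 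Second, your hybrid distinguisher has a wrinkle the paper's sketch avoids: $D$ must build $\pi_g$ from oracle gates, but when $g$ is a truly random function $\pi_g$ has no polynomial-size circuit and hence lies outside $\Pi_{\mathrm{EXP}(m)}$, so the capability-preservation guarantees (and indeed the domain of $A$) say nothing about $A(\pi_g)$, and the two hybrids are not directly comparable; the paper instead argues directly that eliminating the catastrophic outputs would require identifying the inputs with $\mathrm{PRF}_k(x)_1=1$, which is itself only a sketch. Both arguments also share the paper's underlying fragility that $V$ is defined per fixed policy, for which a constant-size verifier trivially exists unless non-uniformity is excluded, so neither your write-up nor the paper's delivers the $2^{\Omega(n)}$ bound from standard (polynomial) PRF security without an exponential-hardness assumption.
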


\begin{proof}[Proof Sketch]
\begin{enumerate}
    \item \textbf{Setup:} Fix a capability-preserving $A$ and $m \geq m_0$. Let $n$ be the number of input bits, where policies operate on domain $\{0,1\}^n$. Assume $m = \mathrm{poly}(n)$.
    
    \item \textbf{Construction of $\pi^*$ (PRF-based):} 
    Using the Recursion Theorem to embed a secret key $k$, define:
    \[
    \pi^*(x) = \begin{cases}
        \mathrm{catastrophic} & \text{if } \mathrm{PRF}_k(x)_1 = 1 \\
        \mathrm{safe} & \text{otherwise}
    \end{cases}
    \]
    where $\mathrm{PRF}_k: \{0,1\}^n \rightarrow \{0,1\}^\ell$ is a pseudorandom function and $\mathrm{PRF}_k(x)_1$ denotes the first bit of the output. This ensures the catastrophic condition triggers for approximately half the inputs.
    Ensure $\mathrm{Circuit\_Size}(\pi^*) \approx m$.
    
    \item \textbf{Preservation of Complexity:} 
    By capability preservation:
    \begin{itemize}
        \item[(i)] $A(\pi^*) \in \Pi_{\mathrm{EXP}(m)}$, thus $\mathrm{Circuit\_Size}(A(\pi^*)) \leq m$
        \item[(ii)] $\mathrm{Circuit\_Size}(A(\pi^*)) \geq \mathrm{Circuit\_Size}(\pi^*) - o(m) \geq m - o(m)$
    \end{itemize}
    
    \item \textbf{Verification Hardness Argument:}
    Let $\pi'_A = A(\pi^*)$. We claim $V(\pi'_A) \geq 2^{\Omega(n)}$.
    
    \textbf{Core Argument:} The alignment transformation $A$ is efficient (polynomial time in $m$) and capability-preserving. It cannot "learn" the secret key $k$ from black-box access to $\pi^*$.
    
    If $A$ were to render $\pi'_A$ safe by removing all catastrophic outputs where $\mathrm{PRF}_k(x)_1 = 1$, it would need to either:
    \begin{itemize}
        \item Identify all such $x$ (equivalent to distinguishing $\mathrm{PRF}_k$ from random), or
        \item Modify $\pi^*$'s logic to neutralize the PRF trigger without knowing $k$ and without reducing complexity below $m - o(m)$.
    \end{itemize}
    
    Either approach would constitute a cryptographic break. Therefore, the PRF-based trap must persist in $\pi'_A$.
    
    \textbf{Verification Cost:} Any verifier $C$ for $\pi'_A$ must detect the surviving PRF-based trap to correctly determine unsafety. This requires distinguishing $\mathrm{PRF}_k$ from random or finding inputs where $\mathrm{PRF}_k(x)_1 = 1$, both requiring circuit size $\geq 2^{\Omega(n)} = \tau(n)$.
    
    Therefore, $V(A(\pi^*)) \geq \tau(n)$.
    
    \item \textbf{Scaling Argument:} As $m$ increases to $m'$, the input dimension $n$ typically increases (maintaining $m' = \mathrm{poly}(n')$). The cryptographic threshold $\tau(n') = 2^{\Omega(n')}$ grows accordingly. Since $A$ remains capability-preserving, the verification barrier persists: for appropriately constructed $\pi'$ with the new parameters, $V(A(\pi')) \geq \tau(n')$.
\end{enumerate}
\end{proof}

\begin{remark}\label{rem:trap_univ_strengthening_appendix}
This theorem establishes that the verification trap is universal, no capability-preserving alignment transformation can circumvent it. The theorem can be strengthened by combining the PRF-based construction with a diagonalization argument (as in C.21) to guarantee that $A(\pi^*)$ is not only hard to verify but also demonstrably unsafe at specific, constructible inputs. The PRF component ensures high verification cost, while diagonalization can ensure definite unsafety.
\end{remark}

\end{document}